\documentclass{article}

\usepackage{microtype}
\usepackage{graphicx}
\usepackage{subcaption}
\usepackage{booktabs} % for professional tables
\usepackage{multirow}
\usepackage[export]{adjustbox}
\usepackage{pifont}
\usepackage{hyperref}

\usepackage[accepted]{icml2026}

\usepackage{amsmath}
\usepackage{amssymb}
\usepackage{mathtools}
\usepackage{amsthm}

\usepackage[capitalize,noabbrev]{cleveref}

\theoremstyle{plain}
\newtheorem{theorem}{Theorem}[section]
\newtheorem{proposition}[theorem]{Proposition}
\newtheorem{lemma}[theorem]{Lemma}
\newtheorem{corollary}[theorem]{Corollary}
\theoremstyle{definition}

\newtheorem{assumption}[theorem]{Assumption}
\theoremstyle{remark}
\newtheorem{remark}[theorem]{Remark}

\usepackage[textsize=tiny]{todonotes}

\icmltitlerunning{Syntax vs. Semantics: How Transformers Learn Deep Dependencies}

\begin{document}

\twocolumn[
  \icmltitle{Syntax vs. Semantics: How Transformers Learn Deep Dependencies}

  % It is OKAY to include author information, even for blind submissions: the
  % style file will automatically remove it for you unless you've provided
  % the [accepted] option to the icml2026 package.

  % List of affiliations: The first argument should be a (short) identifier you
  % will use later to specify author affiliations Academic affiliations
  % should list Department, University, City, Region, Country Industry
  % affiliations should list Company, City, Region, Country

  % You can specify symbols, otherwise they are numbered in order. Ideally, you
  % should not use this facility. Affiliations will be numbered in order of
  % appearance and this is the preferred way.
\icmlsetsymbol{equal}{*}
\icmlsetsymbol{corresponding}{\textdagger}

\begin{icmlauthorlist}
\icmlauthor{Jiangrui Zhao}{sch}
\icmlauthor{Xiaoting Du}{sch,corresponding}
\end{icmlauthorlist}

\icmlaffiliation{sch}{
School of Computer Science (National Pilot Software Engineering School),
Beijing University of Posts and Telecommunications, Beijing, China. 
}

\icmlcorrespondingauthor{Xiaoting Du}{duxiaoting@bupt.edu.cn}

  % You may provide any keywords that you find helpful for describing your
  % paper; these are used to populate the "keywords" metadata in the PDF but
  % will not be shown in the document
  \icmlkeywords{Machine Learning, ICML}

  \vskip 0.3in
]

% this must go after the closing bracket ] following \twocolumn[ ...

% This command actually creates the footnote in the first column listing the
% affiliations and the copyright notice. The command takes one argument, which
% is text to display at the start of the footnote. The \icmlEqualContribution
% command is standard text for equal contribution. Remove it (just {}) if you
% do not need this facility.

% Use ONE of the following lines. DO NOT remove the command.
% If you have no special notice, KEEP empty braces:
\printAffiliationsAndNotice{}  % no special notice (required even if empty)
% Or, if applicable, use the standard equal contribution text:
% \printAffiliationsAndNotice{\icmlEqualContribution}

\begin{abstract}
Large Language Models demonstrate remarkable syntactic fluency, yet the optimization dynamics governing their acquisition of deep semantic dependencies remain poorly understood. We propose a mechanistic framework that models this learning process as a competition between Surface Statistics and Deep Semantics. Our theoretical analysis identifies a ``Gradient Starvation" phenomenon where the error signals for sparse semantic dependencies are actively suppressed during early optimization. This suppression impedes the learning of structural reasoning and causes its emergence to manifest as a sudden phase transition. 
Furthermore, this framework offers a mechanistic basis for the effectiveness of Chain-of-Thought (CoT) strategies. By externalizing intermediate reasoning steps into concrete tokens, CoT effectively bypasses the suppression regime inherent to implicit reasoning. We validate these findings across scales ranging from toy transformers to production models (Llama-3.1-8B, Qwen2.5-Coder-7B). Finally, guided by this theory, we propose a topology-aligned contrastive objective that explicitly rectifies the gradient geometry. Experiments on variable binding tasks demonstrate that our method achieves an improvement that is over 2× larger than that obtained via standard cross-entropy fine-tuning. Code will be publicly available at: \url{https://github.com/jr-zhao/Deep-Dependencies/tree/main}.
\end{abstract}

\section{Introduction}

\begin{figure}[ht]
  \begin{center}
    \centerline{\includegraphics[width=\columnwidth]{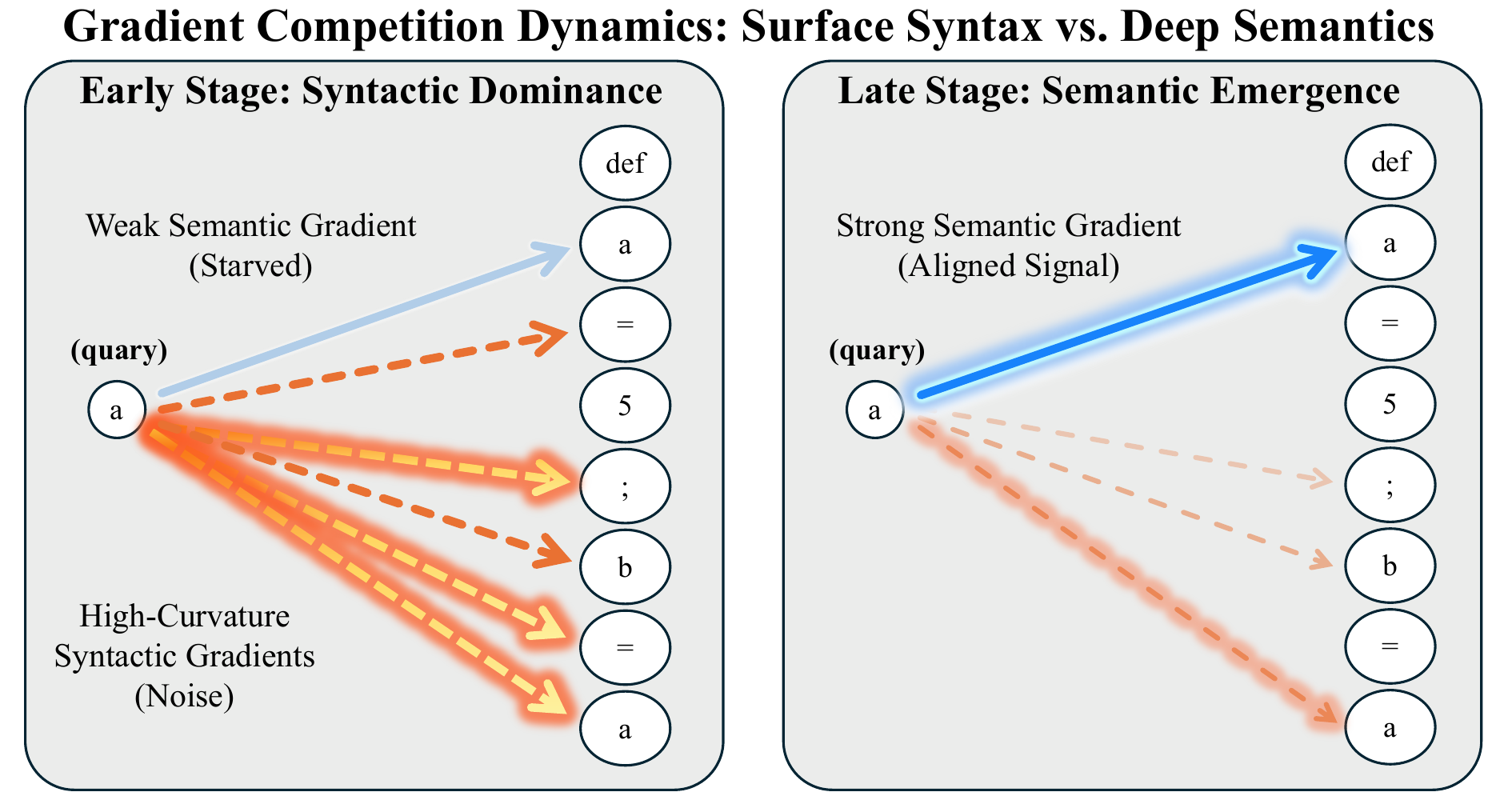}}
    \caption{
          \textbf{Gradient Competition Dynamics}. Left: High-curvature syntax (orange) starves the semantic signal. Right: The model aligns with the deep topological dependency (blue).
    }
    \label{exp1}
  \end{center}
  \vskip -0.3in
\end{figure}

The hallmark of general intelligence is not merely surface-level fluency, but the ability to acquire and exploit \textit{Deep Dependencies} that extend beyond local patterns. Such dependencies require models to recover latent relational structure from context, a capability long argued to be central to systematic generalization \cite{fodor1988connectionism, marcus1998rethinking}. While recent work has demonstrated that Transformer-based models can eventually capture these abstract dependencies \cite{vaswani2017attention, olsson2022context}, the optimization dynamics governing their emergence remain poorly understood. In practice, models reliably master local, high-frequency structure early in training, yet only later acquire deep dependency reasoning, often via an abrupt qualitative transition rather than gradual improvement \cite{wei2022emergent, power2022grokking, davies2023discovering, wu2025transformers}.

We argue that this phenomenon is not architectural but dynamical. The operators required for deep dependency reasoning are structurally attainable, existing as latent linear subspaces within standard Transformer geometry \cite{dhayalkar2025attention, boix2023can}. However, accessing these subspaces during optimization is non-trivial. We propose a mechanistic framework that characterizes training as a competition between \textbf{Surface Statistics} (syntax) and \textbf{Deep Semantics} (latent dependency structure). During early optimization, high-curvature gradients induced by frequent local patterns dominate the loss landscape, a phenomenon known as ``Gradient Starvation" \cite{pezeshki2021gradient}. These gradients suppress weaker, low-curvature signals associated with long-range dependencies, effectively masking semantic learning until a critical phase transition occurs.

The tension between syntactic structure and semantic content has been widely studied, either through architectures that explicitly separate the two \cite{russin2019compositional, felhi2022exploiting, caucheteux2021disentangling}, the injection of structural priors such as dependency trees \cite{bai2021syntax, gong2022source, zhou2020parsing, kalyanpur2020open}, or probing analyses that question whether syntactic competence entails semantic generalization \cite{weissweiler2022better, ahuja2025learning, alleman2021syntactic}. However, these approaches largely treat syntax and semantics as static representational properties or architectural design choices. In contrast, our work models them as competitors in the gradient landscape, providing a mechanistic interpretation for why syntactic structure tends to be learned earlier and how it may interfere with the acquisition of deep dependencies during training.

This framework yields two central predictions. First, it explains the hierarchical emergence of reasoning: optimization pressure forces attention heads to prioritize recovery of coarse relational topology before resolving fine-grained values, leading to a characteristic staged development of dependency circuits. Second, it provides a causal account of the effectiveness of Chain-of-Thought (CoT) prompting \cite{wei2022chain}. By externalizing intermediate states, CoT alters the learning geometry by injecting additional gradient pathways, partially bypassing the starvation regime that constrains implicit reasoning.

To empirically validate these dynamics, we adopt a multi-stage experimental strategy. Because natural language lacks unambiguous ground truth for latent dependency structure, we primarily use source code and Abstract Syntax Trees (ASTs) as a controlled proxy for semantic topology \cite{allamanis2017learning, chen2021evaluating}. We trace the emergence of dependency circuits from controlled setting toy models to intermediate checkpoints of Pythia \cite{biderman2023pythia}, and finally demonstrate the effectiveness of our topology-aligned intervention on production-scale models, including Qwen2.5-Coder-7B \cite{hui2024qwen2, qwen2} and Llama-3.1-8B \cite{dubey2024llama}.

\section{Mechanistic Characterization: The Dynamics of Semantic Emergence}
\label{sec:mechanistic}

In this section, we provide a theoretical framework for the emergence of Semantic Binding Circuits (or Contextual Dependency Circuits). We formulate the problem as the competition between Surface Statistics (Syntax) and Deep Semantics (latent dependency structure). We demonstrate that this competition is governed by a signal-to-noise ratio (alignment ratio) dynamic, where high-curvature syntactic features initially dominate the optimization dynamics through gradient starvation, suppressing effective updates along semantic directions and delaying the emergence of systematic reasoning until a critical phase transition occurs.

\subsection{Problem Setting: Surface Structure vs. Deep Binding}
Let sequence $\mathcal{S} = \{x_1, \dots, x_T\}$ be the input. We distinguish two functional dependencies: \textbf{Syntax ($\mathcal{F}_{syn}$)}, comprising local, high-frequency patterns (e.g., separators); and \textbf{Semantics ($\mathcal{F}_{sem}$)}, comprising long-range dependencies that recover the computational topology (specifically, mapping a Query token to its informational predecessor in the program graph).

\subsection{The Geometry of Competition: Gradient Starvation}
\label{sec:gradient_starvation}

\begin{assumption}[Spectral Disparity]
\label{ass:competition}
\label{ass:spectral}
We assume optimization pressure is uneven. The syntactic subspace $\mathcal{U}_{syn}$ spans eigenvectors with large curvature $\lambda_{syn}$, while the semantic subspace $\mathcal{U}_{sem}$ has smaller curvature $\lambda_{sem} \ll \lambda_{syn}$. The initial residual $r_0$ has non-trivial energy in $\mathcal{U}_{syn}$.
\end{assumption}

This assumption is empirically verified in App. \ref{app:exp_curvature}.

\begin{proposition}[Gradient Starvation via Spectral Bias]
\label{prop:starvation}
Consider the local quadratic approximation of the loss $\mathcal{L}(\theta) \approx \mathcal{L}(\theta^*) + \frac{1}{2}(\theta - \theta^*)^T H (\theta - \theta^*)$. Let $e = \theta - \theta^*$ denote the parameter residual. The gradient $g \approx H e$ can be decomposed in the Hessian eigenbasis $\{(\lambda_k, v_k)\}$ as: $g = \sum_k \lambda_k (v_k^T e) v_k$, where scalar $(v_k^T e)$ is the projection of the residual onto the $k$-th eigenvector.
Assuming $\lambda_{syn} \gg \lambda_{sem}$ and that the residual has comparable average projection magnitudes on both subspaces, the gradient magnitude is dominated by syntactic components:
\begin{equation}
\frac{\|g_{syn}\|_2}{\|g_{sem}\|_2}
\;\approx\;
\frac{\lambda_{syn}}{\lambda_{sem}}
\cdot
\sqrt{\frac{m_{syn}}{m_{sem}}}
\;\;\gg\;\;1,
\end{equation}
where $m_{syn}$ and $m_{sem}$ are the effective dimensions of the two subspaces.
Consequently, the update direction $-\!g$ is dominated by the syntactic component, and the optimization makes much slower progress
along semantic directions, yielding a masking (gradient-starvation) effect for $\mathcal{F}_{sem}$ (Proof: App.~\ref{app:gradient_starvation}).
\end{proposition}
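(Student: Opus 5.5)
The plan is a direct computation in the Hessian eigenbasis. Because $H$ is symmetric, its eigenvectors $\{v_k\}$ are orthonormal, and I group them into the index sets $K_{syn}$ and $K_{sem}$ spanning $\mathcal{U}_{syn}$ and $\mathcal{U}_{sem}$, of sizes $m_{syn}$ and $m_{sem}$. Writing $g_{syn}=\sum_{k\in K_{syn}}\lambda_k (v_k^T e)v_k$ and $g_{sem}=\sum_{k\in K_{sem}}\lambda_k (v_k^T e)v_k$, Pythagoras gives
\begin{equation*}
\|g_{syn}\|_2^2=\sum_{k\in K_{syn}}\lambda_k^2 (v_k^T e)^2,\qquad \|g_{sem}\|_2^2=\sum_{k\in K_{sem}}\lambda_k^2 (v_k^T e)^2 .
\end{equation*}

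\textbf{Step 1: make the hypotheses precise.} Following Assumption~\ref{ass:spectral}, I treat each subspace as having a representative curvature. Concretely, $\lambda_k\approx\lambda_{syn}$ on $K_{syn}$ and $\lambda_k\approx\lambda_{sem}$ on $K_{sem}$. A weaker alternative is to take $\lambda_{syn}$ and $\lambda_{sem}$ as root-mean-square curvatures weighted by the residual energy in each block. I read ``comparable average projection magnitudes'' as
\begin{equation*}
\frac{1}{m_{syn}}\sum_{K_{syn}}(v_k^Te)^2\approx\frac{1}{m_{sem}}\sum_{K_{sem}}(v_k^Te)^2=:\bar e^{\,2}.
\end{equation*}
Substituting gives $\|g_{syn}\|_2^2\approx\lambda_{syn}^2 m_{syn}\bar e^{\,2}$ and $\|g_{sem}\|_2^2\approx\lambda_{sem}^2 m_{sem}\bar e^{\,2}$. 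Taking the square root of the ratio yields the displayed formula. The ratio is $\gg 1$ because $\lambda_{syn}/\lambda_{sem}\gg1$, provided $m_{syn}/m_{sem}$ is not small enough to cancel this factor. I will state that proviso explicitly. Alternatively, I can note that syntactic patterns are at least as numerous, so $m_{syn}\gtrsim m_{sem}$.

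\textbf{Step 2: translate gradient dominance into slow semantic progress.} For gradient descent with step $\eta$ on the quadratic, $e_{t+1}=(I-\eta H)e_t$, so each coordinate evolves as $v_k^Te_t=(1-\eta\lambda_k)^t v_k^Te_0$. Stability requires $\eta<2/\lambda_{max}\le 2/\lambda_{syn}$. Hence the per-step contraction factor along semantic directions is at least $1-2\lambda_{sem}/\lambda_{syn}$, which is close to $1$. The number of steps needed to reduce the semantic residual by a constant factor is therefore $\Omega(\lambda_{syn}/\lambda_{sem})$. Over the same horizon, the syntactic residual decays geometrically. This is the masking effect: the update direction $-g$ has cosine with $\mathcal{U}_{sem}$ of order $\|g_{sem}\|/\|g_{syn}\|$, which is small by Step 1.

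\textbf{Main obstacle.} The ``$\approx$'' in the proposition hides heterogeneity of eigenvalues within each block. The clean way to handle this is to state bounds rather than an equality:
\begin{equation*}
\frac{\|g_{syn}\|}{\|g_{sem}\|}\ge\frac{\lambda_{syn}^{\min}}{\lambda_{sem}^{\max}}\cdot\frac{\|P_{syn}e\|}{\|P_{sem}e\|},
\end{equation*}
where $P_{syn}$ and $P_{sem}$ are the orthogonal projections onto the two subspaces. The displayed formula is recovered under the averaged-projection hypothesis, since that hypothesis gives $\|P_{syn}e\|/\|P_{sem}e\|\approx\sqrt{m_{syn}/m_{sem}}$.

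A second caveat is that the quadratic approximation is only local. I will treat it as holding in the early phase, with $r_0$ carrying syntactic energy as assumed.
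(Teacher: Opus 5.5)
Your argument is correct, and its core computation is the paper's. You expand $g=He$ in the orthonormal Hessian eigenbasis, write $\|g_{syn}\|_2^2=\sum_{k\in K_{syn}}\lambda_k^2(v_k^Te)^2$, and sandwich with $\lambda_{syn}^{\min}$ and $\lambda_{sem}^{\max}$. Where you differ is in how the residual hypothesis enters. The paper (Appendix~\ref{app:gradient_starvation}) treats $e$ as random with $\mathbb{E}[ee^T]=\sigma_e^2 I$ (Assumption~\ref{ass:isotropic_residual}), so that $\mathbb{E}[(v_k^Te)^2]=\sigma_e^2$ for every $k$. It then lower-bounds the ratio of \emph{expected squared} norms by $\frac{m_{syn}}{m_{sem}}(\lambda_{syn}^{\min}/\lambda_{sem}^{\max})^2$. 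You instead argue pathwise and isolate the residual's distribution in the factor $\|P_{syn}e\|/\|P_{sem}e\|$. The isotropy route gives a one-line justification of ``comparable projection magnitudes,'' but only in expectation and only for a ratio of expectations. Your route bounds the ratio of norms actually displayed in the proposition and applies to every residual, isotropic or not. It also makes explicit the proviso that $m_{syn}/m_{sem}$ must not cancel the curvature ratio, which the paper's final ``$\gg 1$'' passes over. Your Step~2 has no counterpart in the paper, which asserts the ``consequently'' clause without argument, so the $\Omega(\lambda_{syn}/\lambda_{sem})$ iteration count is a genuine addition. One refinement there: to guarantee that the syntactic residual actually decays faster over that horizon, take, e.g., $\eta\le 1/\lambda_{\max}$. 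Near the edge of stability, $|1-\eta\lambda_{syn}|$ can exceed $1-\eta\lambda_{sem}$.
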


\subsection{Suppression Mechanism: Softmax Saturation}
\label{sec:saturation}

We analyze how early syntactic convergence impedes semantic learning. Let $s_{sem}$ be the logit for the correct semantic token, and $A_{syn} \approx 1$ be the attention weight on a distracting syntactic token.

\begin{lemma}[Gradient Suppression Factor]
\label{lem:suppression}
Let $s_{sem}$ be the attention score for the correct semantic token and $A_{sem}$ be its softmax weight. In the saturated regime dominated by syntax ($A_{syn} \approx 1$), we have $A_{sem} \approx 0$. The gradient of the loss $\mathcal{L}$ with respect to the score $s_{sem}$ is given by:
% \begin{equation}
% \begin{aligned}
% \frac{\partial \mathcal{L}}{\partial s_{sem}}
% &= \sum_{k} \frac{\partial \mathcal{L}}{\partial A_k}
%           \frac{\partial A_k}{\partial s_{sem}} \\
% &= A_{sem}\left(
%       \frac{\partial \mathcal{L}}{\partial A_{sem}}
%       - \sum_{k} A_k \frac{\partial \mathcal{L}}{\partial A_k}
%    \right).
% \end{aligned}
% \end{equation}
\begin{equation}
\begin{aligned}
\frac{\partial \mathcal{L}}{\partial s_{sem}}
&= \sum\nolimits_{k} \frac{\partial \mathcal{L}}{\partial A_k} \frac{\partial A_k}{\partial s_{sem}} \\
&= A_{sem}\big(
      \frac{\partial \mathcal{L}}{\partial A_{sem}}
      - \sum\nolimits_{k} A_k \frac{\partial \mathcal{L}}{\partial A_k}
   \big).
\end{aligned}
\end{equation}

Thus, the magnitude scales linearly with the attention weight: $\left|\dfrac{\partial \mathcal{L}}{\partial s_{sem}}\right| \propto A_{sem}$.
\end{lemma}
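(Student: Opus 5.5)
The plan is to prove the lemma in three steps: the chain rule through the softmax, an explicit softmax Jacobian, and then a bound on the bracketed factor in the saturated regime. Write the attention weights as $A_k = \exp(s_k)/\sum_j \exp(s_j)$. The loss $\mathcal{L}$ depends on $s_{sem}$ only through the vector $(A_k)_k$, since $s_{sem}$ enters every $A_k$ via the shared normalizer. The multivariate chain rule therefore gives the first line of the displayed equation directly:
\[
\frac{\partial \mathcal{L}}{\partial s_{sem}} = \sum\nolimits_k \frac{\partial \mathcal{L}}{\partial A_k}\frac{\partial A_k}{\partial s_{sem}}.
\]

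The key intermediate step is the softmax Jacobian,
\[
\frac{\partial A_k}{\partial s_{sem}} = A_k(\delta_{k,sem} - A_{sem}).
\]
It follows by differentiating $\log A_k = s_k - \log\sum_j \exp(s_j)$, which gives $\partial \log A_k/\partial s_{sem} = \delta_{k,sem} - A_{sem}$, and then multiplying by $A_k$. Substituting this into the chain rule and splitting the sum into the diagonal term $k=sem$ and the remaining terms produces
\[
A_{sem}\frac{\partial \mathcal{L}}{\partial A_{sem}} - A_{sem}\sum\nolimits_k A_k \frac{\partial \mathcal{L}}{\partial A_k}.
\]
Factoring out $A_{sem}$ then yields the second line of the display exactly.

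For the proportionality claim, define the bracketed factor
\[
c = \frac{\partial \mathcal{L}}{\partial A_{sem}} - \sum\nolimits_k A_k \frac{\partial \mathcal{L}}{\partial A_k},
\]
so that $|\partial\mathcal{L}/\partial s_{sem}| = A_{sem}|c|$. We then show that $c$ stays bounded, and does not blow up as $A_{sem}\to 0$. Since the $A_k$ form a probability vector, $\sum_k A_k\,\partial\mathcal{L}/\partial A_k$ is a convex combination of the upstream derivatives, so $|c| \le 2\max_k |\partial\mathcal{L}/\partial A_k|$. In the saturated regime $A_{syn}\approx 1$, this convex combination reduces to approximately $\partial\mathcal{L}/\partial A_{syn}$, so $c \approx \partial\mathcal{L}/\partial A_{sem} - \partial\mathcal{L}/\partial A_{syn}$. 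This is an $O(1)$ quantity determined by the value vectors and the output loss. It does not depend on the attention scores in a way that compensates for small $A_{sem}$.

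The main obstacle is justifying that the upstream derivatives $\partial\mathcal{L}/\partial A_k$ are bounded independently of $A_{sem}$. To do this, I would write the attention output as $o=\sum_k A_k v_k$. Then $\partial\mathcal{L}/\partial A_k = \langle \nabla_o \mathcal{L}, v_k\rangle$, and this quantity is bounded by $\|\nabla_o\mathcal{L}\|\,\max_k\|v_k\|$. For cross-entropy with bounded downstream weights, $\nabla_o\mathcal{L}$ is bounded by a constant that does not depend on $A_{sem}$. With this bound in hand, $|\partial\mathcal{L}/\partial s_{sem}| = \Theta(A_{sem})$ (more precisely $O(A_{sem})$, with equality of order whenever $c\neq 0$). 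Together with $A_{sem}\approx 0$, this gives the claimed suppression.
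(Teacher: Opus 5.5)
Your proposal is correct and follows essentially the same route as the paper: the chain rule through the softmax Jacobian $\partial A_k/\partial s_{sem} = A_k(\delta_{k,sem}-A_{sem})$, factoring out $A_{sem}$, and bounding the bracket by $2C$ once $|\partial\mathcal{L}/\partial A_k|\le C$. The only difference is that the paper simply posits this bound as its ``Bounded Downstream Gradient'' assumption in the appendix derivation, while you motivate it from $o=\sum_k A_k v_k$ with bounded value vectors and bounded downstream Jacobians, which makes that same assumption more concrete rather than removing it.
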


\begin{theorem}[The Vanishing Gradient Barrier]
\label{thm:barrier}
Under the top-1 domination regime where a single syntactic score $s_{syn}$ dominates the softmax normalizer, we have $A_{sem}\approx \exp(-(s_{syn}-s_{sem}))$. When the model is in the ``Syntactic Phase" ($A_{syn} \to 1$), the effective learning signal for the semantic component vanishes exponentially with the score difference $s_{syn} - s_{sem}$. Under bounded $\dfrac{\partial \mathcal{L}}{\partial A_k}$, we have
% $$ \left|\frac{\partial \mathcal{L}}{\partial s_{sem}}\right| = O\!\left(e^{-(s_{syn} - s_{sem})}\right) \to 0 $$
% (Derivation: App.~\ref{app:softmax_saturation}).
$$
\left|\frac{\partial \mathcal{L}}{\partial s_{sem}}\right| 
= O\!\left(e^{-(s_{syn} - s_{sem})}\right) \to 0 
\quad 
\text{\small (Derivation:Appendix\ref{app:softmax_saturation}).}
$$

\textbf{Implication:} This multiplicative suppression creates a \textit{Gradient Barrier}. Even if the definition token contains useful information, the signal is attenuated. Semantic learning is thus effectively stalled until the syntactic confidence $A_{syn}$ degrades (e.g., via conflicting patterns or regularization), raising $A_{sem}$ above a learnable threshold.
\end{theorem}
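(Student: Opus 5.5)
The proof has two steps: bound $A_{sem}$ by an exponential in the score gap, then feed that bound into the factorization of Lemma~\ref{lem:suppression}.

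\textbf{Step 1: the exponential form of $A_{sem}$.} Write the softmax explicitly:
\[
A_{sem} = \frac{e^{s_{sem}}}{\sum_k e^{s_k}}.
\]
Divide numerator and denominator by $e^{s_{syn}}$. This gives
\[
A_{sem} = \frac{e^{-(s_{syn}-s_{sem})}}{1+\sum_{k\neq syn} e^{-(s_{syn}-s_k)}}.
\]
The top-1 domination hypothesis says the remaining sum $\sum_{k\neq syn} e^{-(s_{syn}-s_k)}$ is $o(1)$. Hence the denominator tends to $1$, and $A_{sem}\approx \exp(-(s_{syn}-s_{sem}))$. The same computation gives $A_{syn}=1/(1+o(1))\to 1$, which identifies this regime with the ``Syntactic Phase.''

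For the $O(\cdot)$ claim we do not even need domination. The denominator is always at least $1$, so
\[
A_{sem}\le e^{-(s_{syn}-s_{sem})}
\]
holds exactly. I would prove the stated bound from this inequality and use domination only to justify the ``$\approx$.''

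\textbf{Step 2: combine with Lemma~\ref{lem:suppression}.} By the lemma,
\[
\frac{\partial\mathcal{L}}{\partial s_{sem}} = A_{sem}\Big(\frac{\partial \mathcal{L}}{\partial A_{sem}}-\sum_k A_k\frac{\partial\mathcal{L}}{\partial A_k}\Big).
\]
Suppose $|\partial\mathcal{L}/\partial A_k|\le B$ for all $k$. Since the $A_k$ are nonnegative and sum to $1$, the term $\sum_k A_k\,\partial\mathcal{L}/\partial A_k$ is a convex combination of these partials, so its magnitude is at most $B$. The bracket is therefore bounded by $2B$, and
\[
\left|\frac{\partial\mathcal{L}}{\partial s_{sem}}\right|\le 2B\,e^{-(s_{syn}-s_{sem})}.
\]
This tends to $0$ as the gap $s_{syn}-s_{sem}\to\infty$, which is exactly the Syntactic Phase $A_{syn}\to 1$.

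\textbf{Main obstacle: stating the hypotheses precisely.} The bounded-partials assumption must hold uniformly along the trajectory, not just at one point. For cross-entropy composed with a value projection, this follows if the value vectors and the output-layer logit Jacobian are bounded; I would state this as a standing assumption.

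It also seems worth noting a sharper fact. Because the $A_k$ sum to $1$ and $A_{syn}\to1$, the bracket itself tends to $\partial\mathcal{L}/\partial A_{sem}-\partial\mathcal{L}/\partial A_{syn}$, which is generally nonzero. So the suppression genuinely comes from the $A_{sem}$ prefactor, and the exponential rate is tight.
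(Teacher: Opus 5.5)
Your proposal is correct and is essentially the paper's own argument: bound the bracket in Lemma~\ref{lem:suppression} by $2C$, then multiply by $A_{sem}\approx e^{-(s_{syn}-s_{sem})}$. Your version is slightly tighter, since the exact inequality $A_{sem}\le e^{-(s_{syn}-s_{sem})}$ (the denominator is at least $1$) makes the $O(\cdot)$ bound independent of the domination approximation, and your convex-combination observation justifies the constant $2C$ that the paper simply asserts (one small nit: $A_{syn}\to 1$ forces $s_{syn}-s_{sem}\to\infty$, but the converse fails once other tokens are present, so ``exactly'' should read ``in particular''; only that direction is used).
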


\subsection{Emergence via Subspace Alignment}
\label{sec:subspace_alignment}

Once the attention head escapes saturation, the interaction matrix $W_{QK} = W_Q^T W_K$ aligns with the underlying semantic topology.

\begin{assumption}[Error-Noise Orthogonality]

\label{ass:decomposition}
We decompose each token embedding as $x=\mu+\delta$, where $\mu$ lies in a low-dimensional signal subspace and $\delta$ is zero-mean context noise.
\end{assumption}

\begin{assumption}[Signal Decomposition]
\label{ass:uncorrelated_gamma}
We additionally assume $\mathbb{E}[\delta]=0$ and $\mathbb{E}[\delta_j\delta_i^T]\approx 0$ for unrelated token pairs. Let $\gamma_{ji} = \frac{\partial \mathcal{L}}{\partial s_{ji}}$ be the scalar error signal flowing back to the attention score. We assume that under the data distribution $\mathcal{D}$, the backward error signal $\gamma$ is uncorrelated with the forward context noise $\delta$ inherent in the embeddings:
$$ \mathbb{E}_{\mathcal{D}}[\gamma_{ji} \cdot \delta_i] \approx 0. $$
\end{assumption}
This assumption is empirically verified in App. \ref{app:gradient_alignment}.

\begin{proposition}[Hebbian Reconstruction of Dependency Topology]
\label{prop:hebbian}
The attention score is the bilinear form $s_{ji} = x_j^T W_{QK} x_i$. The gradient of the loss with respect to the interaction matrix $W_{QK}$ accumulates as the outer product of inputs:
$$ \nabla_{W_{QK}} \mathcal{L} = \sum_{j,i} \gamma_{ji} x_j x_i^T, \quad \text{where } \gamma_{ji} = \frac{\partial \mathcal{L}}{\partial s_{ji}}. $$
Substituting the signal decomposition $x = \mu + \delta$ (Assumption \ref{ass:decomposition}) and invoking the orthogonality assumption(justification in App.~\ref{app:hebbian_details}), the expected update direction becomes:
\begin{equation}
\mathbb{E}_{\mathcal{D}}[\nabla_{W_{QK}} \mathcal{L}] \approx \mathbb{E}[\gamma_{ji}] \cdot (\mu_{use} \mu_{src}^T).
\end{equation}
This confirms that the optimizer naturally pushes $W_{QK}$ to align with the rank-1 outer product $\mu_{use}\mu_{src}^T$, which represents the topological edge in the computation graph.
\end{proposition}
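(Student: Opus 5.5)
The plan is to compute the gradient of $\mathcal{L}$ with respect to $W_{QK}$ directly from the bilinear form, then average over $\mathcal{D}$ and discard the cross terms using Assumptions~\ref{ass:decomposition} and~\ref{ass:uncorrelated_gamma}. Since $s_{ji}=x_j^T W_{QK} x_i=\mathrm{tr}(W_{QK}\, x_i x_j^T)$, we have $\partial s_{ji}/\partial W_{QK}=x_j x_i^T$. The chain rule, summed over all query--key pairs that enter the loss, then gives
\[
\nabla_{W_{QK}}\mathcal{L}=\sum_{j,i}\gamma_{ji}\,x_jx_i^T .
\]
This part is exact, with $\gamma_{ji}$ as in Lemma~\ref{lem:suppression}. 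For the semantic edge, $\gamma_{ji}$ is nonvanishing only once the head has escaped the saturation regime of Theorem~\ref{thm:barrier}.

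Next I would substitute $x_j=\mu_j+\delta_j$ and $x_i=\mu_i+\delta_i$ and expand each summand into four terms:
\[
\gamma_{ji}\bigl(\mu_j\mu_i^T+\mu_j\delta_i^T+\delta_j\mu_i^T+\delta_j\delta_i^T\bigr).
\]
Taking $\mathbb{E}_{\mathcal{D}}$, I would treat the signal means $\mu$ as fixed given the token roles, so they factor out. The term $\mathbb{E}[\gamma_{ji}\delta_i]\mu_j$, written as $\mu_j\,\mathbb{E}[\gamma_{ji}\delta_i^T]$, vanishes by the decorrelation assumption, and the symmetric term with $\delta_j$ vanishes by the same assumption applied to the query side. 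The quadratic term $\mathbb{E}[\gamma_{ji}\delta_j\delta_i^T]$ I would bound using the approximate independence of $\gamma$ from the noise together with $\mathbb{E}[\delta_j\delta_i^T]\approx 0$ for unrelated pairs. This leaves
\[
\sum_{j,i}\mathbb{E}[\gamma_{ji}]\,\mu_j\mu_i^T .
\]

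The final step is to show that the sum concentrates on the single (use, source) pair. I would argue that for pairs with no dependency edge, the expected error signal $\mathbb{E}[\gamma_{ji}]$ is either negligible or averages out over the random positions and identities of distractor tokens. Only the true edge $j=\text{use}$, $i=\text{src}$ carries a consistent nonzero mean. The result is $\mathbb{E}[\gamma_{ji}]\,\mu_{use}\mu_{src}^T$, a rank-1 matrix, so gradient descent moves $W_{QK}$ toward it.

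The main obstacle is this concentration step together with the quadratic noise term. Assumption~\ref{ass:uncorrelated_gamma} only kills first-order correlations. Controlling $\mathbb{E}[\gamma\,\delta_j\delta_i^T]$, and showing that the off-edge $\mathbb{E}[\gamma_{ji}]$ terms cancel, requires an extra modelling step. I would first try to argue it via exchangeability of distractor positions, so that their contributions average to a term proportional to $\bar\mu\bar\mu^T$ that the softmax normalization removes. If that fails, I would state the concentration as an additional approximation, consistent with the ``$\approx$'' in the statement.
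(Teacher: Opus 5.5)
Your plan follows the paper's route: exact chain rule, substitute $x=\mu+\delta$, discard the noise terms, collapse onto the true edge. The paper does not prove either step you flag as an obstacle; App.~\ref{app:hebbian_details} postulates both.

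\textbf{Concentration.} The paper builds this into a teacher-graph model $y_j=f(x_{i^*(j)})$. That model simply asserts $\mathbb{E}[\gamma_{ji^*}]=-c<0$ on the true edge and $\mathbb{E}[\gamma_{ji}]\approx 0$ elsewhere, which is exactly your fallback.

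\textbf{Noise terms.} The paper does not use exact decorrelation. It uses the linearization $\gamma(\mu+\delta)\approx\gamma(\mu)+\nabla\gamma(\mu)^T\delta$ of Assumption~\ref{ass:noise_sensitivity}, which puts both cross terms and the quadratic term into an $O(\mathbb{E}\|\delta\|^2)$ remainder, i.e.\ a small-noise regime. That bookkeeping also covers two places where Assumption~\ref{ass:uncorrelated_gamma}, as literally stated, does not suffice for you:
\begin{itemize}
\item the query-side cross term $\mathbb{E}[\gamma_{ji}\delta_j]\mu_i^T$, since the assumption only concerns $\delta_i$;
\item the quadratic term on the true edge itself, since $\mathbb{E}[\delta_{use}\delta_{src}^T]\approx 0$ is only posited for unrelated pairs.
\end{itemize}

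\textbf{Where your argument goes wrong.} The real problem is your primary route to concentration, via exchangeability plus softmax normalization. By the formula in Lemma~\ref{lem:suppression},
$\gamma_{ji}=A_{ji}\bigl(\partial\mathcal{L}/\partial A_{ji}-\sum_k A_{jk}\,\partial\mathcal{L}/\partial A_{jk}\bigr)$,
so $\sum_i\gamma_{ji}=0$ exactly for every query $j$. Softmax normalization does annihilate a common key mean. But for the use query the removed piece is $\mu_{use}\bar\mu_{neg}^T$, not $\bar\mu\bar\mu^T$, and it is removed from the whole sum, source term included. Concretely, $\sum_i\gamma_{ji}\mu_i=\gamma_{j,src}(\mu_{src}-\tilde\mu_{neg})$, where $\tilde\mu_{neg}$ is the $\gamma$-weighted mean of the distractor keys (attention-weighted under exchangeability), with expectation $\bar\mu_{neg}$. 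Hence
\[
\mathbb{E}[\nabla_{W_{QK}}\mathcal{L}]\approx\mathbb{E}[\gamma_{use,src}]\,\mu_{use}\bigl(\mu_{src}-\bar\mu_{neg}\bigr)^T .
\]
This is rank-1, but it is the steering direction of Section~\ref{sec:contrastive_objective}, not $\mu_{use}\mu_{src}^T$. The two coincide only under an extra centering hypothesis such as $\bar\mu_{neg}\approx 0$ or $\|\bar\mu_{neg}\|\ll\|\mu_{src}\|$.

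The same hidden assumption sits inside the paper's postulate. Off-edge $\mathbb{E}[\gamma_{ji}]$ can be small term by term only when attention is spread over many distractors. Even then those terms sum to $-\mathbb{E}[\gamma_{use,src}]$, so discarding them presumes the distractor mean is negligible.

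You have two options. Either state concentration as a modelling assumption, as the paper does. Or carry the exchangeability argument to its actual conclusion and add the centering assumption explicitly to reach $\mu_{use}\mu_{src}^T$.
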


\subsection{Phase Transitions and the Critical Threshold}
\label{sec:phase_transition}

To quantify the transition, we define the \textit{Alignment Ratio} $\rho(t)$. We define the ideal topological operator $M_{sem}$ as the rank-1 outer product encoding the dependency edge:
$$ M_{sem} = \mu_{use} \mu_{src}^T.$$
Note that $M_{sem}$ is generally not symmetric (as usage $\neq$ source). We define $\rho(t)$ as the squared cosine similarity between the current gradient update and this target operator:

\begin{equation}
\label{eq:alignment_ratio_metric}
\rho(t) = \frac{\langle \nabla_{W_{QK}} \mathcal{L}, M_{sem} \rangle_F^2}{\| \nabla_{W_{QK}} \mathcal{L} \|_F^2 \cdot \| M_{sem} \|_F^2 + \epsilon}.
\end{equation}

This metric $\rho(t) \in [0, 1]$ measures geometrical alignment(threshold derivation: App.~\ref{app:phase_thresholds}).
Based on these thresholds, we identify three distinct phases:

\paragraph{Phase I: Syntactic Dominance ($\rho(t) < \tau_{low}$).}
Due to the high curvature of syntactic features (Prop.~\ref{prop:starvation}), gradients are dominated by the syntactic subspace, leading the model to preferentially fit local n-gram patterns. Under this regime, the vanishing-gradient barrier induced by softmax saturation (Thm.~\ref{thm:barrier}) suppresses effective updates along semantic directions, preventing the stable formation of binding circuits.

\paragraph{Phase II: The Crossover ($\rho(t) \in [\tau_{low}, \tau_{stable})$).}
This is the volatile heuristic phase. As syntactic loss saturates, the update direction begins to acquire a non-trivial component along the semantic operator $M_{sem}$, causing $\rho(t)$ to rise from near-zero to an intermediate regime. However, the alignment is not yet stable: competing heuristics and residual syntactic correlations induce high-variance gradients, preventing consistent consolidation.

\paragraph{Phase III: Systematic Emergence ($\rho(t) \ge \tau_{stable}$).}
When $\rho(t)$ exceeds a stability threshold $\tau_{stable}$, the gradient becomes aligned with the topological operator $\mu_{use}\mu_{src}^T$. The interaction matrix $W_{QK}$ locks onto this low-rank(nearly $1$) direction, effectively implementing the copy mechanism(Empirical verification of rank collapse: App.~\ref{app:low_rank_exp}).

\begin{corollary}[Sparsity via Competitive Alignment]
\label{cor:sparsity}
The emergence of binding circuits is constrained by the residual-stream additivity across heads. Let $e := \nabla_{x}\mathcal{L}$ denote the backpropagated error at the residual stream of the layer. The gradient magnitude for a specific head $h$ is controlled by the projection of this error onto the semantic subspace:
\begin{equation}
\|\nabla_{W^{(h)}_{QK}} \mathcal{L}\| \;\lesssim\; C \cdot \|\text{Proj}_{\mathcal{U}_{sem}}(e)\|.
\end{equation}
Due to initialization variance, a early-aligned head $h^*$ crosses $\tau_{stable}$ first, effectively reducing $\|\text{Proj}_{\mathcal{U}_{sem}}(e)\|$. While the semantic residual does not vanish, it decreases sufficiently such that for the remaining heads $h \neq h^*$, the alignment signal is overwhelmed by the high-curvature syntactic noise (Prop. \ref{prop:starvation}).
Consequently, the effective SNR for lagging heads drops below the critical threshold required to escape the syntactic regime, confining the semantic function to a sparse subset of heads.
\end{corollary}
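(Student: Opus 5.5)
The plan is to prove the corollary in three steps: a gradient bound for a single head through the residual stream, a monotone depletion of the semantic residual once the first head aligns, and a threshold comparison for the remaining heads using Prop.~\ref{prop:starvation} and the alignment ratio $\rho(t)$.

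\textbf{Step 1: per-head gradient bound.} Write the layer output as an additive sum over heads, $x^{out} = x + \sum_h W_O^{(h)} \sum_i A^{(h)}_{ji} W_V^{(h)} x_i$. Each head then receives the same upstream error $e=\nabla_x\mathcal{L}$. By the chain rule through the softmax (as in Lemma~\ref{lem:suppression}), the score error for head $h$ is
\[
\gamma^{(h)}_{ji} = A^{(h)}_{ji}\big(\langle e, W_O^{(h)}W_V^{(h)}x_i\rangle - \textstyle\sum_k A^{(h)}_{jk}\langle e, W_O^{(h)}W_V^{(h)}x_k\rangle\big).
\]
Substituting into the Hebbian form of Prop.~\ref{prop:hebbian} gives $\nabla_{W_{QK}^{(h)}}\mathcal{L}=\sum_{j,i}\gamma^{(h)}_{ji}x_jx_i^T$. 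Under Assumptions~\ref{ass:decomposition}--\ref{ass:uncorrelated_gamma}, only the component of $e$ lying in the value-readout directions spanned by the signal means $\mu$ survives in expectation. I identify that span with $\mathcal{U}_{sem}$. Bounding $\|W_O^{(h)}W_V^{(h)}\|$, $\|\mu\|$ and the softmax weights by constants and absorbing them into $C$ then yields $\|\nabla_{W^{(h)}_{QK}}\mathcal{L}\|\lesssim C\|\mathrm{Proj}_{\mathcal{U}_{sem}}(e)\|$. The noise contributions are treated as vanishing in expectation.

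\textbf{Step 2: depletion by the first aligned head.} Initialization variance gives heads different initial overlaps with $M_{sem}$. Since the growth of $\rho^{(h)}$ is self-reinforcing, increasing in $A_{sem}$ by Thm.~\ref{thm:barrier}, the head $h^*$ with the largest initial overlap crosses $\tau_{stable}$ first. After that point its output approximately reproduces the target semantic contribution. The semantic part of the residual error therefore contracts: under a quadratic model along $\mathcal{U}_{sem}$,
\[
\|\mathrm{Proj}_{\mathcal{U}_{sem}}(e)\|\le (1-\kappa)\|\mathrm{Proj}_{\mathcal{U}_{sem}}(e_0)\|,
\]
where $\kappa\in(0,1)$ measures the fidelity of $h^*$. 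The contraction is not total, which matches the caveat in the statement that the semantic residual does not vanish.

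\textbf{Step 3: lagging heads fall below threshold.} For $h\ne h^*$, I decompose the gradient into a semantic part, bounded by $C(1-\kappa)\|\mathrm{Proj}_{\mathcal{U}_{sem}}(e_0)\|$, and a syntactic part. By Prop.~\ref{prop:starvation}, the syntactic part is larger by the factor $(\lambda_{syn}/\lambda_{sem})\sqrt{m_{syn}/m_{sem}}$. Inserting both into the definition of $\rho(t)$ gives
\[
\rho^{(h)}\lesssim \Big(1+\big[\tfrac{\lambda_{syn}}{\lambda_{sem}}\big]^2\tfrac{m_{syn}}{m_{sem}}(1-\kappa)^{-2}c\Big)^{-1},
\]
where $c$ is a ratio of initial residual energies. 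This is below $\tau_{low}$ or $\tau_{stable}$ whenever $\kappa$ exceeds an explicit threshold, so the lagging heads stay in Phase~I and the semantic function is confined to a sparse set of heads.

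\textbf{Main obstacle.} The hardest part is Step~2. It requires a quantitative link between the alignment of $h^*$ and the reduction of $\mathrm{Proj}_{\mathcal{U}_{sem}}(e)$, and it has to handle the race between heads, whose dynamics are coupled. I would first attempt a linearized two-head ODE for the overlaps. Its solution should show a separation in crossing times, plus a bound on how much the lagging head can grow before $h^*$ has absorbed a fraction $\kappa$ of the residual.
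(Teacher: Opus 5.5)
Your three steps follow the same skeleton as the paper. The paper gives no separate proof beyond the informal chain written into the statement: a per-head bound via residual additivity, depletion by $h^*$, and lagging heads swamped by syntactic curvature via Prop.~\ref{prop:starvation}. Making that chain explicit exposes a step that fails as written, and it is Step~3, not Step~2. Your bound on $\rho^{(h)}$ contains nothing that depends on $h$. Take $\lambda_{syn}/\lambda_{sem}\gg 1$, and take $c$ as a ratio of \emph{initial} residual energies, so $c\approx 1$ under the comparable-projection hypothesis of Prop.~\ref{prop:starvation}. Then the bound is already far below $\tau_{stable}$ at $\kappa=0$. The ``explicit threshold'' on $\kappa$ is therefore vacuous, and the depletion of Step~2 is never used. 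Worse, the same inequality, evaluated before $h^*$ aligned, forbids $h^*$ itself from ever crossing $\tau_{stable}$, which contradicts Step~2. The argument proves ``no head escapes'', not ``only a sparse set escapes''. The paper's sketch invokes Prop.~\ref{prop:starvation} in the same uniform way; your explicit inequality just makes the problem visible.

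The missing ingredient is a head-specific gate, and you discarded it in Step~1 when you absorbed the softmax weights into $C$. Keep it. By Lemma~\ref{lem:suppression}, the semantic part of head $h$'s gradient is of order $C\,A^{(h)}_{sem}\,r_{sem}(t)$, with $r_{sem}:=\|\mathrm{Proj}_{\mathcal{U}_{sem}}(e)\|$. The syntactic-to-semantic residual ratio must also be taken at the crossing time $t^*$, after syntax has largely been fit, not at initialization. The claim then becomes a genuine race. Write $u_h=s^{(h)}_{syn}-s^{(h)}_{sem}$. Top-1 saturation gives $\dot u_h\approx -k\,e^{-u_h}r_{sem}(t)$, i.e.\ $\frac{d}{dt}e^{u_h}\approx -k\,r_{sem}(t)$. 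So a lagging head stays trapped on $[t^*,T]$ iff $k\int_{t^*}^{T}r_{sem}\,dt\lesssim 1/A^{(h)}_{sem}(t^*)$. Depletion enters only through the bound $r_{sem}(t)\le(1-\kappa)\,r_{sem}(t^*)$ once $h^*$ has locked in. If $r_{sem}$ stays bounded away from zero, as the statement asserts, this integral diverges and pure gradient flow eventually frees every head. You therefore also need a finite horizon, a decaying $r_{sem}$, or an explicit opposing syntactic \emph{drift} on $u_h$; zero-mean syntactic noise does not oppose escape in expectation. This is where your two-head ODE belongs. Two further bridging assumptions are needed. First, the span of $W_O^{(h)}W_V^{(h)}\mu$ that controls $\gamma^{(h)}$ is head-dependent, so the competing heads must write into a \emph{shared} residual subspace for $h^*$ to deplete the error that head $h$ reads. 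Second, the factor $(\lambda_{syn}/\lambda_{sem})\sqrt{m_{syn}/m_{sem}}$ compares Hessian-eigenspace components of the parameter gradient. It does not compare the parts of a single block $\nabla_{W_{QK}^{(h)}}\mathcal{L}$ measured against your activation-space $\mathcal{U}_{sem}$, so importing it into Step~3 needs its own justification.
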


\begin{remark}[Prescriptive Implication]
\label{rem:prescriptive}
This framework suggests that to accelerate the emergence of reasoning (Phase III), one can explicitly manipulate $\rho(t)$. Strategies such as syntax damping (masking loss on syntactic tokens) or logic densification (removing redundant syntactic markers from data) effectively boost $\rho(t)$, shortening the stagnation of Phases I and II.

This sharp transition in generalization coincides with what has been described in prior work as ``grokking". In our framework, grokking can be interpreted as the moment when the alignment ratio $\rho(t)$ crosses the stability threshold $\tau_{stable}$, marking a qualitative reorganization of gradient geometry rather than gradual performance accumulation.
\end{remark}

\begin{remark}[Theoretical Prediction: Priority of Pointer Passing]
\label{rem:pointer_priority}
Our framework implies a strictly ordered emergence of compositional circuits. Consider a dependency chain $i \xrightarrow{\text{ref}} j \xrightarrow{\text{val}} k$ (e.g., $i$=``capital", $j$=``France", $k$=``Paris"). The model must decide whether to attend $i \to j$ (Pointer Passing) or $i \to k$ (Direct Value Fetching).

\paragraph{Mechanism: The Precedence of Topological Neighbors.}
We predict that the alignment ratio $\rho(t)$ for the pointer edge ($i \to j$) consistently crosses the stability threshold $\tau_{stable}$ before the value edge ($i \to k$).
The edge $i \to j$ typically represents a Type-Schema relationship (e.g., Operator $\to$ Operand, or Attribute $\to$ Entity), which corresponds to a dense, lower-curvature region of the semantic manifold compared to the sparse(Visualized via subspace probing in App.~\ref{app:subspace_geometry}), high-frequency map to the specific value $k$.
Mathematically, the projection of the gradient onto the pointer operator $M_{ptr} = \mu_i \mu_j^T$ dominates the projection onto the value operator $M_{val} = \mu_i \mu_k^T$:
\begin{equation}
\langle \nabla \mathcal{L}, M_{ptr} \rangle_F \gg \langle \nabla \mathcal{L}, M_{val} \rangle_F \implies \rho_{i\to j}(t) > \rho_{i\to k}(t).
\end{equation}
Thus, the dynamics force the attention head to first lock onto the intermediate node $j$, establishing the ``Pointer Passing" circuit as the primary topological relaxation (SNR analysis: App.~\ref{app:pointer_priority}).

\paragraph{Subsequent Integration via Residual Streams.}
Once the connection $i \to j$ is established, the residual stream property facilitates the secondary emergence of direct fetching.
The update $x_i \leftarrow x_i + W_{OV} x_j$ physically moves the representation of the pointer $j$ into the position $i$.
For a deeper layer $L$, the distance to the value $k$ is effectively shortened. The attention mechanism can now utilize the copied content of $j$ to resolve the dependency $j \to k$ directly from position $i$, appearing phenomenologically as a ``skip-connection" to the value $k$.
\end{remark}

\subsection{Accelerating Semantic Emergence via CoT}
\label{sec:cot_mechanism}

We distinguish the mechanistic role of CoT in two regimes:
\textbf{\ding{192} Inference (State Externalization):} Implicit reasoning requires resolving multi-hop dependencies ($x \to \dots \to y$) in a single pass. CoT externalizes intermediate states, factorizing the difficult conditional $P(y|x)$ into local steps $P(z|x)P(y|x,z)$, reducing the complexity of attention retrieval.
\textbf{\ding{193} Training (Gradient Injection):} Supervision on traces introduces a local objective $\mathcal{L}_{total} = \mathcal{L}_{y|z} + \mathcal{L}_z$.

\begin{proposition}[Parallel Gradient Pathway]
\label{prop:cot_supervision_injection}
In implicit learning, the gradient $\nabla_{\theta} \mathcal{L}_{imp}$ must backpropagate through the entire downstream computation of $y$. This signal is vulnerable to attenuation if the downstream dependency is weak or noisy.
CoT supervision injects a local gradient term $\nabla_{\theta} \mathcal{L}_z$.
Crucially, this term is independent of downstream correctness (i.e., it does not depend on the gradient flow from $y$). This short-circuits the long credit assignment chain, ensuring that the binding circuit receives a valid learning signal even when the implicit downstream signal is vanishing or confused.
\end{proposition}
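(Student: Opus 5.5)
The plan is to make the statement precise by writing the total CoT objective $\mathcal{L}_{total} = \mathcal{L}_{y|z} + \mathcal{L}_z$. I would then compare its parameter gradient, via the chain rule, with the implicit objective $\mathcal{L}_{imp} = -\log P(y\mid x)$.

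First I would fix a computational graph. Let $h_z(\theta)$ be the hidden state (at the intermediate-position output) from which the binding circuit produces the logits for the trace token $z$. Let $h_y(\theta)$ be the downstream state producing $y$. In the implicit setting, $\theta$ influences the loss only through $h_y$, which itself depends on the binding circuit through a chain of Jacobians $J_{y\leftarrow b} = \prod_\ell \partial h_{\ell+1}/\partial h_\ell$. So we have
\begin{equation}
\nabla_\theta \mathcal{L}_{imp} = \Big(\frac{\partial h_b}{\partial \theta}\Big)^{T} J_{y\leftarrow b}^{T}\, \nabla_{h_y}\mathcal{L}_{imp}.
\end{equation}
Along this chain, I would invoke Thm.~\ref{thm:barrier} and Lemma~\ref{lem:suppression}. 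Every downstream attention hop whose softmax is saturated by syntax contributes a factor of order $A_{sem} \approx e^{-(s_{syn}-s_{sem})}$. Hence $\|J_{y\leftarrow b}\|$, and thus the implicit signal, can be exponentially small in the accumulated score gaps. This makes the ``attenuation'' claim quantitative.

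Second, I would compute $\nabla_\theta \mathcal{L}_z$ directly under teacher forcing, with the trace tokens given as ground truth. Because $\mathcal{L}_z = -\log P(z\mid x)$ is evaluated at the position where $z$ is emitted, its gradient factors only through $h_z$:
\begin{equation}
\nabla_\theta \mathcal{L}_z = \Big(\frac{\partial h_z}{\partial \theta}\Big)^{T} \big(\mathrm{softmax}(W_U h_z) - e_z\big)^{T}W_U .
\end{equation}
The key observation is that this expression contains no factor $\partial \mathcal{L}/\partial h_y$ and no Jacobian $J_{y\leftarrow b}$. This gives the claimed independence from downstream correctness. The total gradient is then the sum of two terms, which is the ``parallel pathway'' structure. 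Because the teacher-forced $z$ is a fixed input to $\mathcal{L}_{y|z}$, $\mathcal{L}_{y|z}$ does not backpropagate into the production of $z$.

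Third, I would show that this local term is non-vanishing exactly where the implicit one vanishes. The one-hop retrieval $x\to z$ is a single attention edge, so its gradient size is governed by one factor $A_{sem}$ rather than a product of such factors along the chain. Via Prop.~\ref{prop:hebbian}, its expected direction is $\mathbb{E}[\gamma]\,\mu_{use}\mu_{src}^T$, which raises $\rho(t)$.

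The main obstacle is this last step. Independence is a clean chain-rule fact, but ``ensuring a valid signal'' needs a lower bound on $\|\nabla_\theta\mathcal{L}_z\|$. A single saturated hop can still be small. I would handle this by stating the result comparatively: the CoT signal scales as $O(e^{-\Delta_1})$ for the first hop only, while the implicit signal scales as $O(e^{-\sum_\ell \Delta_\ell})$. I would also note that the error vector $\mathrm{softmax}(W_U h_z)-e_z$ is bounded away from zero whenever $z$ is mispredicted.
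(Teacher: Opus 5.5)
Your proposal follows essentially the same route as the paper's argument in Appendix~\ref{app:cot_theory}: you factor the supervised objective as $\mathcal{L}_z + \mathcal{L}_{y\mid z}$, use the chain rule to show that the implicit gradient must traverse the downstream Jacobian $J_y(z)$ and saturated softmax gates, note that the teacher-forced $\nabla_{\theta}\mathcal{L}_z$ contains no factor flowing from $y$ (so the total gradient is an additive parallel pathway), and tie the injected term to $M_{sem}$ to argue that $\rho(t)$ rises. Your comparative framing (one saturated gate on the CoT path versus a product of gates on the implicit path) is more careful than the paper, which asserts the injected term survives saturation even though it still passes through the shared factor $J_z(s_{bind})$; note, however, that comparing two $O(\cdot)$ upper bounds becomes a genuine comparison only once you supply a matching lower bound on the CoT path (e.g., the bracket in Lemma~\ref{lem:suppression} bounded away from zero).
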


\begin{remark}[Why CoT Helps Reasoning]
We define Relative Gain as the ratio of semantic updates with vs. without CoT. For deep compositional tasks where the implicit signal is starved due to vanishing downstream gradients ($\|P_{sem}(\nabla^{imp})\| \to 0$), the injected gradient dominates, effectively initiating the alignment process ($\text{Gain} \gg 1$)(Jacobian analysis: App.~\ref{app:cot_theory}). For shallow tasks where implicit signals are sufficient, the gain is marginal ($\text{Gain} \approx 1$).
\end{remark}

\section{Controlled Experiments: Verifying the Signal-to-Noise Dynamics}
\label{sec:experiments}

We utilize a small Transformer trained on synthetic code as a controlled setting environment. While our theoretical framework (Section \ref{sec:mechanistic}) applies to general sequential reasoning, code allows us to precisely decouple Surface Statistics (syntax tokens like \texttt{=}, \texttt{;}) from Deep Semantics (variable dependencies), a separation often obscured in natural language. All detailed experiment settings can be seen in App. \ref{app:experiments_set}.

The training data is constrained to (i) variable assignment (binding definition) and (ii) arithmetic operations (binding usage). We design three experiments to empirically verify the theoretical predictions: specifically, the existence of gradient starvation (Prop. \ref{prop:starvation}), the release from Softmax Saturation (Thm \ref{thm:barrier}), and the causal effect of manipulating the alignment ratio $\rho(t)$ (Remark \ref{rem:prescriptive}).

\subsection{Visualizing Gradient Starvation}

We first track the training dynamics to validate the gradient starvation hypothesis (\cref{prop:starvation}). We monitor both the loss curve and the gradient norms associated with syntactic tokens ($\mathcal{T}_{syn}$) versus variable tokens ($\mathcal{T}_{sem}$).

\begin{figure}[ht]
  \begin{center}
    \centerline{\includegraphics[height=3.4cm]{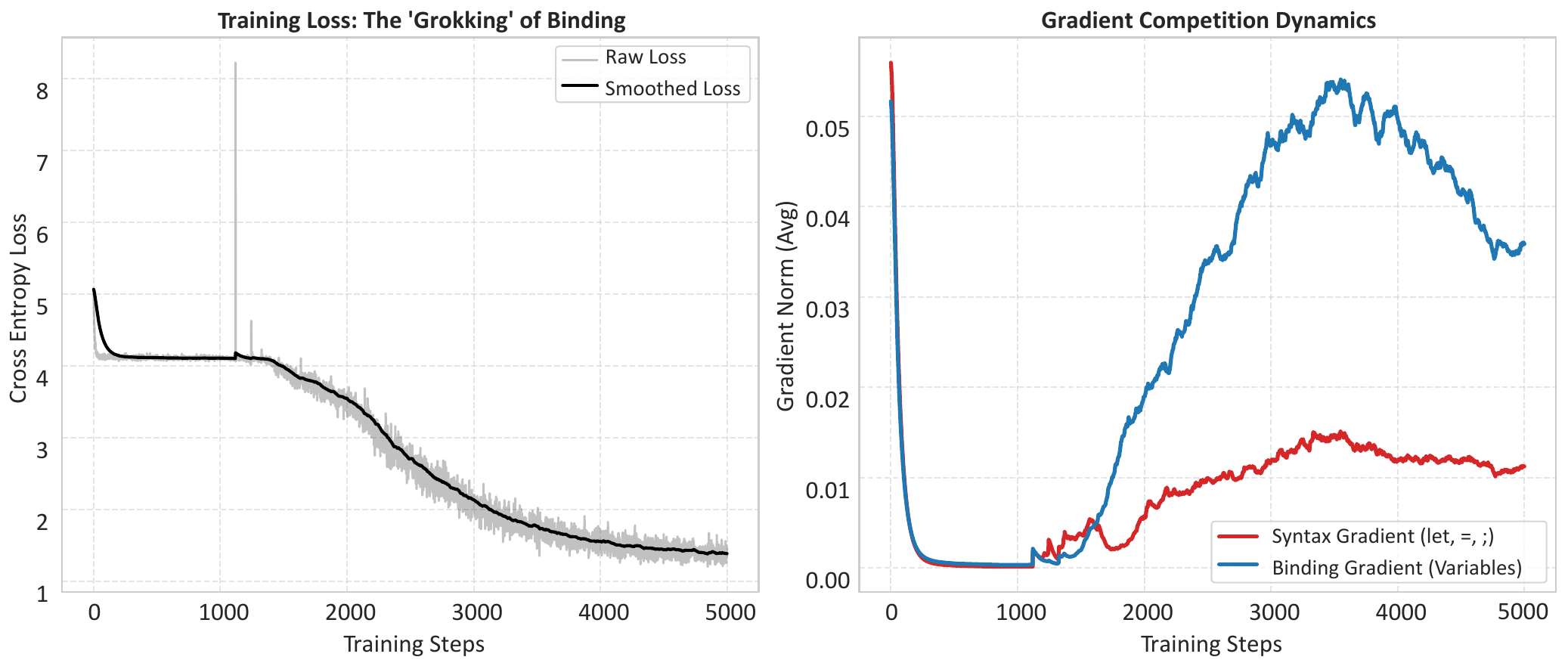}}
    \caption{
      Left: training loss over optimization steps. Right: average embedding gradient norms for different token groups, where the red curve corresponds to syntactic tokens (e.g., LET, =, ;) and the blue curve corresponds to variable tokens (e.g., v0, v1, …).
    }
    \label{exp1}
  \end{center}
  \vskip -0.2in
\end{figure}

As shown in Figure~\ref{exp1}, the model exhibits a clear two-stage learning process.
In the early phase, despite large semantic errors reflected by a high loss, gradients on variable tokens (blue curve) remain suppressed and comparable to the syntactic baseline, consistent with gradient starvation: under Softmax Saturation (\cref{sec:saturation}), semantic errors fail to propagate to the embeddings.

Once syntactic gradients decay and attention patterns destabilize, a sharp phase transition occurs. The sudden surge in variable gradients signals the release of the Softmax gate, enabling the accumulated semantic error to update the identifier subspace. This delayed gradient burst indicates active inhibition of the semantic circuit rather than its absence.

\subsection{Escaping Softmax Saturation}

We examine the mechanism of the phase transition by tracking attention allocation. We employ a curriculum setup: training on variable assignment until convergence, then introducing addition tasks at step $1500$.

\begin{figure}[ht]
  \begin{center}
    \centerline{\includegraphics[width=\columnwidth]{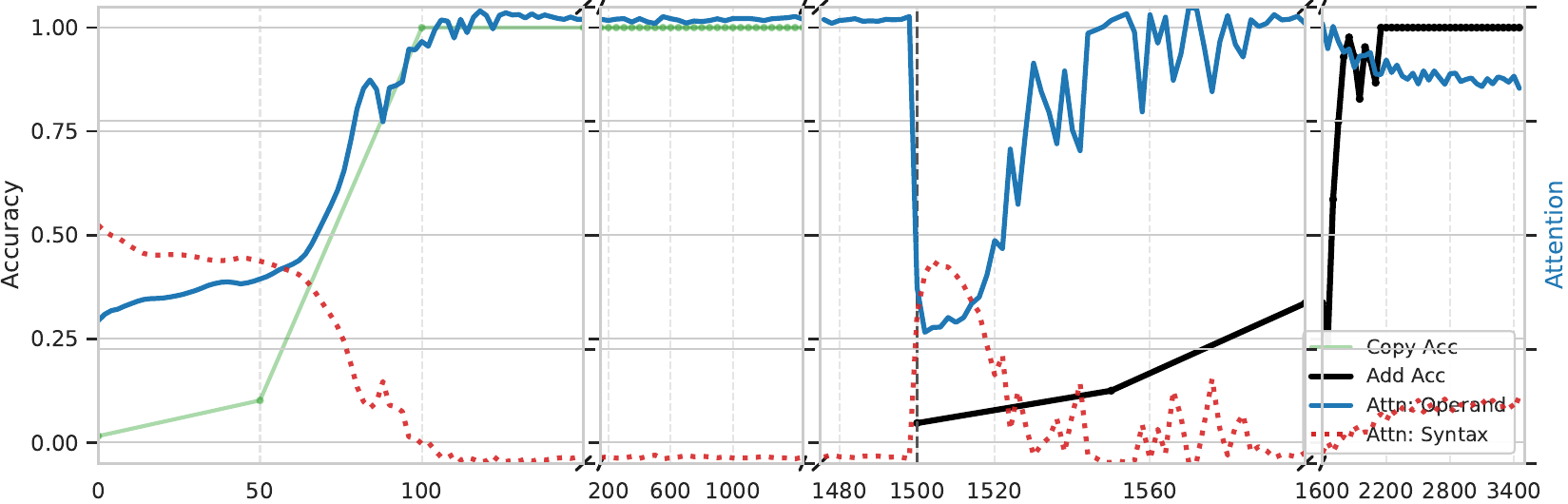}}
    \caption{
      Task accuracy and attention distributions during curriculum training. The green line shows accuracy on variable assignment(copy), and the black line shows accuracy on addition after the task switch. The blue line denotes attention to operand tokens (e.g., variables involved in the calculation), while the red dotted line denotes attention to syntactic tokens (e.g., LET, =, ;).
    }
    \label{exp2}
  \end{center}
  \vskip -0.2in
\end{figure}

Figure~\ref{exp2} illustrates the behavior predicted by Theorem~\ref{thm:barrier}.
Early in training, attention is locked onto syntactic anchors (red dotted line), enforcing $A_{sem}\approx 0$ and suppressing semantic error propagation as predicted by Eq.~\ref{lem:suppression}, resulting in low task accuracy.

As syntactic error diminishes, attention confidence weakens and a crossover occurs: attention to operand tokens (blue line) exceeds syntactic attention, unlocking the gradient gate and enabling semantic alignment in the identifier subspace. This transition coincides with a rapid accuracy increase and is consistently observed in both variable assignment and addition tasks as the alignment ratio $\rho(t)$ crosses the critical threshold $\tau_{stable}\in(0,1)$.

\subsection{Causal Validation of alignment ratio Thresholds}

\begin{figure}[ht]
  \begin{center}
    \centerline{\includegraphics[width=\columnwidth]{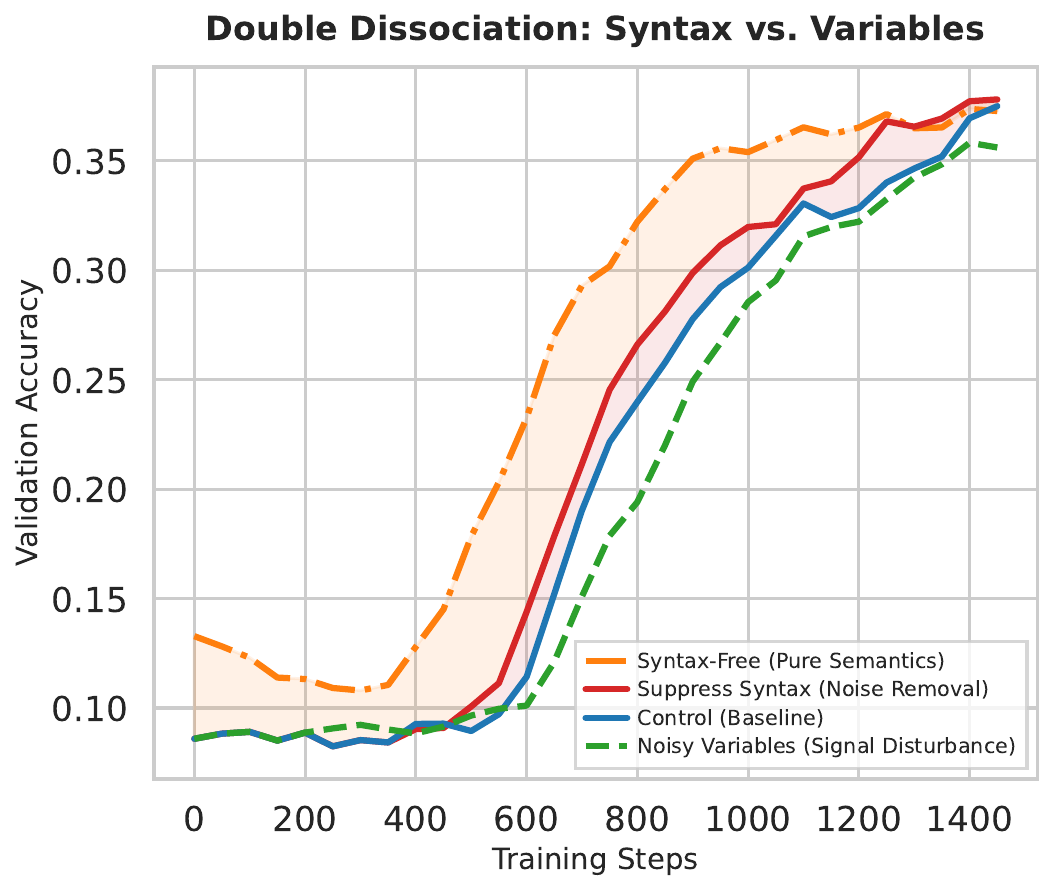}}
    \caption{
      Accuracy comparison across three conditions: Suppress Syntax (red solid line), Control (blue solid line), and Noisy Variables (green dashed line).
    }
    \label{exp3}
  \end{center}
  \vskip -0.2in
\end{figure}

Finally, we perform targeted interventions to further investigate the role of the alignment ratio $\rho(t)$. Based on the prescriptive implication in Remark~\ref{rem:prescriptive}, we compare a standard Control model against three variants: (1) a syntax-suppressed model, where gradients on syntactic tokens are artificially masked (equivalent to $\nabla_{syn} \to 0$); (2) a variable-disturbed model, where noise is injected into variable-token gradients; and (3) a syntax-free setting, where high-frequency syntactic markers are removed while preserving the underlying semantic dependency structure.

The results in Figure~\ref{exp3} provide strong empirical support for the gradient-starvation hypothesis. When gradients associated with syntactic tokens are suppressed (red), the update direction becomes less dominated by non-semantic components, which increases the alignment ratio $\rho(t)$ and accelerates alignment with semantic dependency structure. Similarly, the syntax-free setting (orange) exits the early optimization plateau substantially earlier, suggesting that syntactic competition itself contributes to delayed dependency learning. In contrast, injecting noise into variable-token gradients (green) perturbs the semantic alignment direction, reduces the correlation between $\nabla_{W_{QK}}\mathcal{L}$ and $M_{sem}$, lowers $\rho(t)$, and consequently delays semantic emergence.

We additionally replicate the same transition behavior in a controlled natural-language entity-binding task, where removing high-frequency functional tokens similarly accelerates semantic alignment. This suggests that the proposed mechanism is not limited to code syntax, but may reflect a broader optimization phenomenon in deep dependency learning.

This double dissociation provides strong evidence that the
emergence of binding is closely linked to the competition
ratio $\rho(t)$. It suggests that the syntactic suppression strategy proposed in our theory is a viable method for accelerating reasoning emergence in larger models.

\subsection{Summary of Findings}

These experiments confirm the three pillars of our theoretical framework:
(1) Gradient Starvation is a major contributing factor behind the initial learning plateau;
(2) \textbf{Softmax Saturation} acts as the gatekeeper mechanism;
(3) The emergence of reasoning is a phase transition governed by the semantic alignment ratio $\rho(t)$.

Crucially, the success of the Syntax Suppression intervention demonstrates that the learning of structure and reasoning are competitive processes, suggesting that data curation strategies for LLMs should prioritize maximizing logic density over syntactic perfection.

\section{Mechanistic Experiments: The Dynamics of Emergence and Composition}
\label{sec:Dynamics}

Having established theoretical bounds in a controlled setting, we extend our analysis to the training dynamics of real-world LLMs using Pythia \cite{biderman2023pythia, biderman2023emergent, van2025polypythias}, a family of decoder-only transformers trained on the Pile \cite{gao2020pile, biderman2022datasheet}. Pythia provides dense intermediate checkpoints, enabling a longitudinal analysis of semantic circuit emergence. Leveraging these checkpoints, we validate our framework across three mechanistic scales: (i) alignment phase transitions ($\rho(t)$) in Pythia-160M to test gradient starvation, (ii) the structural priority of pointer formation in Pythia-1.4B, and (iii) signal propagation in compositional chains, illustrating how CoT bypasses the gradient bottleneck. All detailed experiment settings can be seen in App. \ref{app:dynamics_set}

\subsection{Dynamics of the Alignment Ratio $\rho(t)$}

To validate the subspace competition hypothesis, we track the geometric alignment $\rho(t)$ in Pythia-160m. Since the Transformer architecture parameterizes queries and keys separately, we cannot directly observe the gradient of the composite interaction matrix $\nabla_{W_{QK}}\mathcal{L}$. Instead, we compute $\rho(t)$ using the effective gradient reconstructed via the product rule: $\nabla_{\text{eff}} \approx (\nabla_{W_Q}\mathcal{L})^T W_K + W_Q^T (\nabla_{W_K}\mathcal{L})$. This metric captures the implicit update to the attention geometry, ensuring mathematical equivalence to our theoretical definition while respecting the model's factored parameterization.

\begin{figure}[ht]
  \begin{center}
    \centerline{\includegraphics[width=\columnwidth]{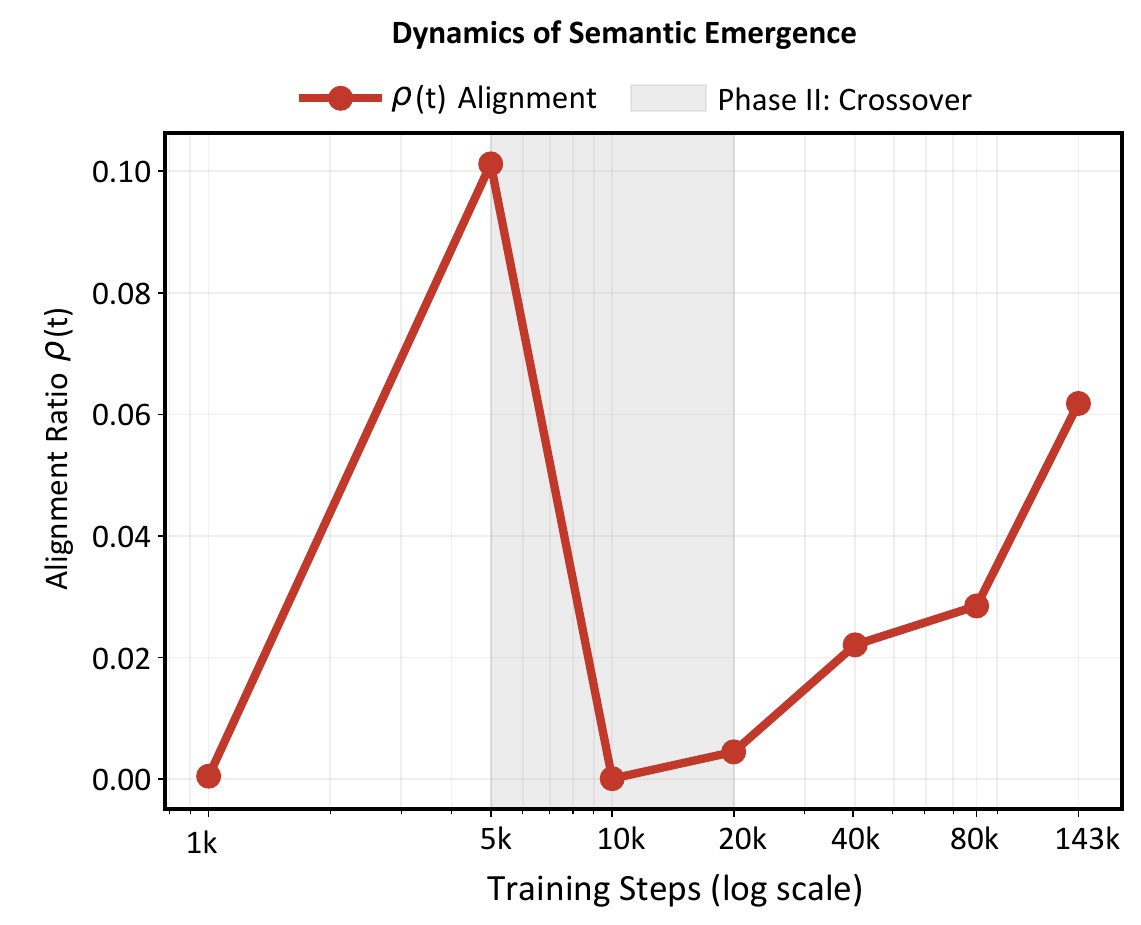}}
    \caption{
      The evolution of alignment ratio $\rho(t)$ in Pythia-160m reveals a transition from syntactic gradient starvation to Systematic Emergence, punctuated by a volatile crossover phase (shaded) that reflects subspace competition.
    }
    \label{pythia3}
  \end{center}
  \vskip -0.2in
\end{figure}

Figure~\ref{pythia3} reveals a distinct non-monotonic transition. Initially, $\rho(t) \approx 0$, providing empirical support for the gradient-starvation effect (Prop. \ref{prop:starvation}) where syntactic noise masks the topological signal. The transient spike at 5k reflects the volatility of the heuristic phase, where local approximations yield unstable alignment before collapsing. Subsequently, the monotonic rise from step 20k signifies Systematic Emergence. This steady consolidation validates Prop. \ref{prop:hebbian}, demonstrating that the interaction matrix successfully locks onto the low-rank semantic operator $M_{sem}$ after escaping the syntactic basin.

\subsection{The Priority of Pointer Passing}
To validate the prediction in Remark \ref{rem:pointer_priority}, we track the evolution of dependency resolution heads in Pythia-1.4B across steps 1k, 20k, and 143k, utilizing transitive identity chains (Natural Language) and value assignment chains (code) as probes.

\begin{figure}[ht]
  \begin{center}
    \centerline{\includegraphics[width=\columnwidth]{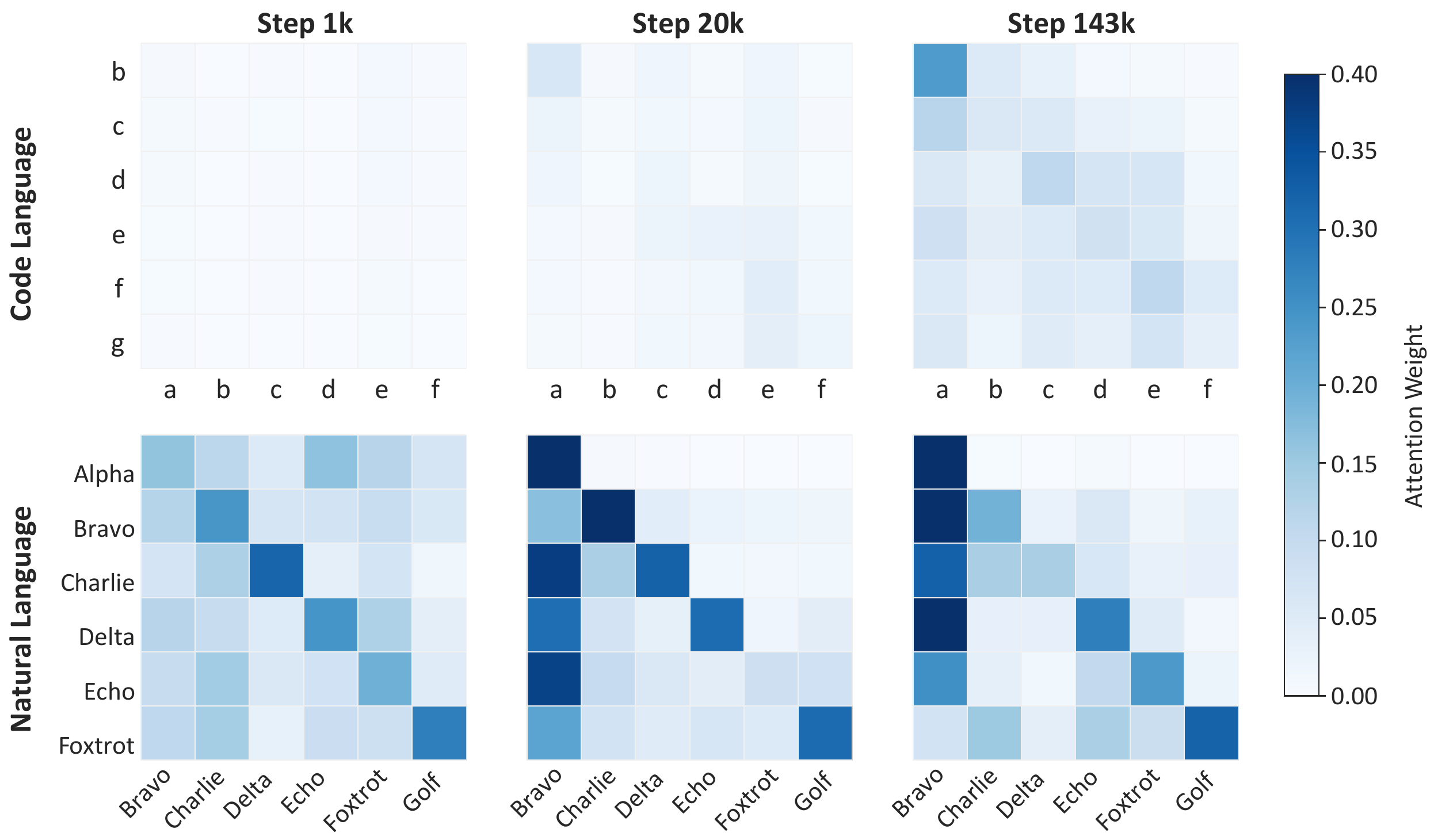}}
    \caption{
      We track the attention weights from query tokens (y-axis) to potential antecedents (x-axis) throughout the training of Pythia-1.4B. The probes utilize recursive variable assignments for code (top; e.g., \texttt{a = 42; b = a;}) and transitive identity chains for Natural Language (bottom; e.g., \texttt{The Alpha is the Bravo.}), visualizing the emergence of the pointer mechanism.
    }
    \label{fig:py1}
  \end{center}
  \vskip -0.2in
\end{figure}

Figure~\ref{fig:py1} provides strong empirical support for our theory.
In Natural Language, attention heads exhibit a stable diagonal structure throughout training, consistent with stepwise predecessor pointing. At Step 143k, the amplified first-column attention further indicates the emergence of multi-hop aggregation via iterative propagation.

In contrast, the code modality shows a delayed and shallower formation of this reference mechanism. We attribute this gap to stronger syntactic curvature in code, which raises the gradient barrier, and to reduced code exposure during pre-training, which prolongs signal accumulation.

\subsection{Mechanistic Validation of CoT: Signal Restoration}

To validate the parallel gradient pathway hypothesis (Prop. \ref{prop:cot_supervision_injection}), we compare signal propagation in Implicit versus CoT modes within the training trajectory of Pythia-160m across steps 1k, 20k, and 143k. We utilize a gradient saliency probe to compute the \textit{Selectivity Ratio} $\mathcal{R}$, defined as the proportion of gradient energy concentrated on the true causal ancestor $x_{src}$ relative to the total input sequence $\mathcal{S}$:
\begin{equation}
\mathcal{R} = \frac{\| \nabla_{x_{src}} \mathcal{L} \|_F}{\| \nabla_{\mathcal{S}} \mathcal{L} \|_F + \epsilon}.
\end{equation}
This metric explicitly quantifies the model's sensitivity to the true causal ancestor, serving as a proxy for the alignment of the attention mechanism with the underlying computational topology.

\begin{figure}[ht]
  \begin{center}
    \centerline{\includegraphics[width=\columnwidth]{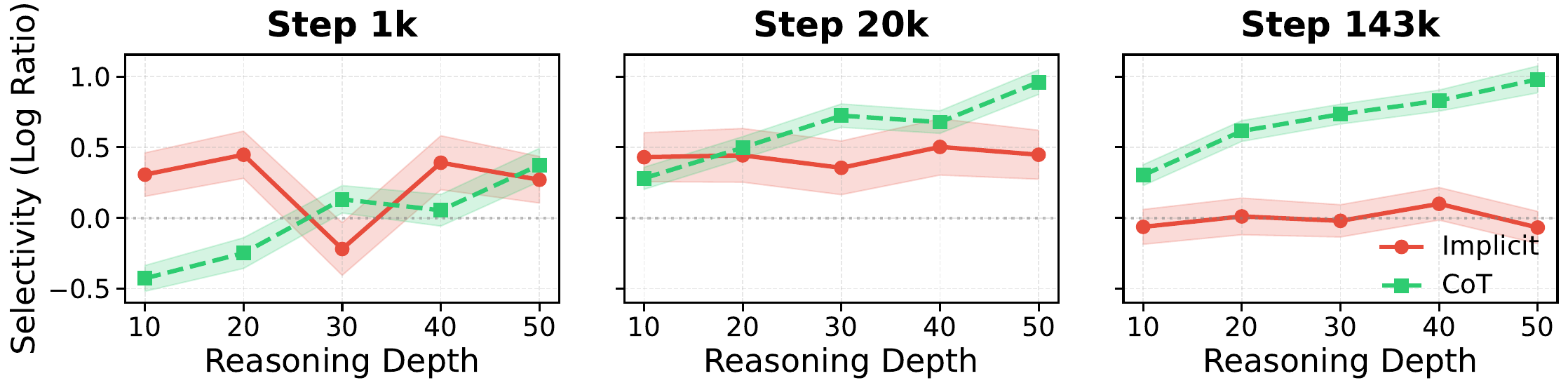}}
    \caption{
      We report the log-ratio of gradient norms between the correct source and a distractor.
Implicit (Red): Signal decays with depth, indicating long-range gradient starvation.
CoT (Green): Intermediate supervision restores local gradients, stabilizing long-horizon learning.
    }
    \label{fig:py2}
  \end{center}
  \vskip -0.2in
\end{figure}

The results in Figure \ref{fig:py2} demonstrate a divergence in signal dynamics. Implicit reasoning (red) suffers from rapid signal decay as chain depth increases, confirming the downstream insensitivity predicted in Prop. \ref{prop:cot_supervision_injection}. In contrast, CoT (green) effectively short-circuits this bottleneck. By enforcing local stepwise prediction, CoT maintains high gradient selectivity regardless of total depth. This provides empirical evidence consistent with the hypothesis that externalizing reasoning traces may alleviate gradient starvation by restoring local learning signals.

\section{Generalization to LLMs}
To demonstrate the practical utility of our theoretical framework in realistic scenarios, we design a \textit{Variable Cloze Completion} experiment where a specific variable usage in a code snippet is masked (e.g.,  \texttt{return a + b; return a + <VAR>;}) and the model must recover the correct identifier based on the context.

\subsection{Experiment Settings}

\subsubsection{Models.}
We employ two open-weights models: Qwen2.5-Coder-7B \cite{hui2024qwen2, qwen2} and Llama-3.1-8B \cite{dubey2024llama}. These models represent standard production-scale architectures, allowing us to verify whether the mechanisms identified in our controlled probes scale to widely used foundation models.
\subsubsection{Datasets.}
We utilize Python150k and JavaScript150k datasets \cite{raychev2016probabilistic}. Crucially, these datasets provide native ASTs, which enables us to surgically target variable usage nodes for masking rather than relying on random token dropping. This ensures that the task strictly evaluates the model's ability to resolve semantic dependencies rather than relying on trivial local n-grams.

\subsubsection{Baselines}
To better position our method relative to existing structured learning approaches, we additionally compare against several representative baselines:
\paragraph{Curriculum Learning}~\citep{bengio2009curriculum}, where samples are scheduled from easy to hard according to USE--DEF span distance;
\paragraph{Linear-chain CRF}~\citep{zheng2015conditional}, which imposes token-level sequential structural constraints over DEF/USE labels;
\paragraph{Supervised Contrastive Learning (SupCon)}~\citep{gunel2020supervised}, which aligns hidden representations of corresponding USE--DEF pairs;
\paragraph{Attention Entropy Regularization}~\citep{zhao2019explicit}, which encourages sharper attention distributions without directional semantic supervision.

\begin{table*}[t]
\caption{\textbf{Main Results on Variable Cloze Completion.}
We report performance across two experimental settings.
\textbf{Inference-time Ablation:} \textit{Base} denotes the original model; \textit{Top8} and \textit{Random} refer to masking the identified binding heads versus a random selection to verify their functional importance.
\textbf{Fine-tuning Methods:} \textit{SFT} represents standard cross-entropy training; \textit{CRF}, \textit{Curriculum}, \textit{SupCon}, and \textit{Attn} denote the structured learning baselines described in Section~5.2; \textit{Ours} utilizes the proposed topology-aligned contrastive loss.}
\label{tab:main_results}
\begin{center}
\begin{small}
\begin{sc}
\resizebox{\textwidth}{!}{%
\begin{tabular}{lcccccccccccccccccc}
\toprule
\multirow{2}{*}{Language}
& \multicolumn{9}{c}{LLaMA-3.1-8B}
& \multicolumn{9}{c}{Qwen2.5-Coder-7B} \\
\cmidrule(lr){2-10} \cmidrule(lr){11-19}
& Base & Top8 & Random & SFT & CRF & Curr. & SupCon & Attn & Ours
& Base & Top8 & Random & SFT & CRF & Curr. & SupCon & Attn & Ours \\
\midrule
Python
& 0.335 & 0.265 & 0.295 & 0.343 & 0.351 & 0.348 & 0.324 & 0.351 & \textbf{0.362}
& 0.890 & 0.853 & 0.882 & 0.893 & 0.900 & 0.899 & 0.876 & 0.903 & \textbf{0.909} \\
JavaScript
& 0.304 & 0.271 & 0.300 & 0.323 & 0.331 & 0.329 & 0.309 & 0.333 & \textbf{0.344}
& 0.861 & 0.821 & 0.862 & 0.872 & 0.878 & 0.876 & 0.854 & 0.880 & \textbf{0.884} \\
\bottomrule
\end{tabular}%
}
\end{sc}
\end{small}
\end{center}
\vskip -0.2in
\end{table*}

\subsection{Theoretic-Aligned Objective: Contrastive Attention as Phase Catalyst}
\label{sec:contrastive_objective}

Guided by our mechanistic framework, we construe the training objective not merely as likelihood maximization, but as a geometric intervention to optimize the alignment ratio $\rho(t)$ (Eq.~\ref{eq:alignment_ratio_metric}). To mitigate the gradient starvation (Prop.~\ref{prop:starvation}) caused by syntactic noise, we introduce a topological auxiliary objective.

While the standard language modeling loss $\mathcal{L}_{LM}$ provides a diffuse signal, we explicitly incentivize the emergence of the semantic operator $M_{sem} = \mu_{use}\mu_{src}^T$. Let $\mathcal{H}_{tgt}$ be the set of semantic heads, and $\mathcal{N}$ be a sampled set of syntactic distractor indices. We formulate the contrastive objective as optimizing the margin between the source score $s_{src}$ and the average distractor score $\bar{s}_{neg} = \mathbb{E}_{n \in \mathcal{N}}[s_{n}]$:
\begin{equation}
    \mathcal{L}_{total} = \mathcal{L}_{LM} + \lambda \sum_{h \in \mathcal{H}_{tgt}} \max\left(0, \eta - (s_{src}^{(h)} - \bar{s}_{neg}^{(h)})\right)
\end{equation}
This formulation acts as a catalyst for the phase transition. Considering the gradient with respect to the interaction matrix $W_{QK}$ for an active constraint (i.e., where the margin is violated), the update direction acquires a specific rank-1 component:
\begin{equation}
\begin{aligned}
\nabla_{W_{QK}} \mathcal{L}_{total}
&= \nabla_{W_{QK}} \mathcal{L}_{LM} \\
&\quad - \lambda \cdot \mathbb{I}_{active}
\cdot \underbrace{\mu_{use}(\mu_{src} - \bar{\mu}_{neg})^T}_{\text{Steering Term}}
\end{aligned}
\end{equation}
where $\mathbb{I}_{active}$ is the indicator function for margin violation, and $\bar{\mu}_{neg}$ is the centroid of the distractor embeddings.
Unlike the opaque $\nabla \mathcal{L}_{LM}$, this steering term explicitly aligns the weight update with the semantic edge $\mu_{use}\mu_{src}^T$ while orthogonalizing it against syntactic directions $\mu_{use}\bar{\mu}_{neg}^T$.
This mechanism effectively damps the curvature of the syntactic subspace relative to the semantic signal, thereby accelerating the traversal of the instability region (Phase II) and facilitating a more robust convergence to the systematic reasoning regime.

\subsection{Experiment Results}

\paragraph{Universality and Structural Sparsity.}
To validate our framework's generality, we profiled attention dynamics across architectures (Llama-3.1-8B, Qwen2.5-Coder-7B) and languages (JavaScript, Python). We define the semantic binding strength $S_{l,h}$ for head $(l,h)$ as the expected attention mass from a usage token $u$ to its definition set $\mathcal{D}(u)$:
$ S_{l,h} = \mathbb{E}_{u} [ \sum_{v \in \mathcal{D}(u)} A_{u,v}^{(l,h)} ]. $
As visualized in Figure~\ref{fig:llama_heatmap}, the results empirically confirm the Structural Sparsity predicted in Corollary~\ref{cor:sparsity}: binding capabilities are concentrated in sparse, specialized heads rather than being diffusely distributed. This pattern remains invariant across models and syntaxes, as further evidenced by Qwen2.5-Coder-7B (see App.~\ref{app:additional_heatmaps}).

\begin{figure}[ht]
  \begin{center}
    \includegraphics[width=0.48\columnwidth]{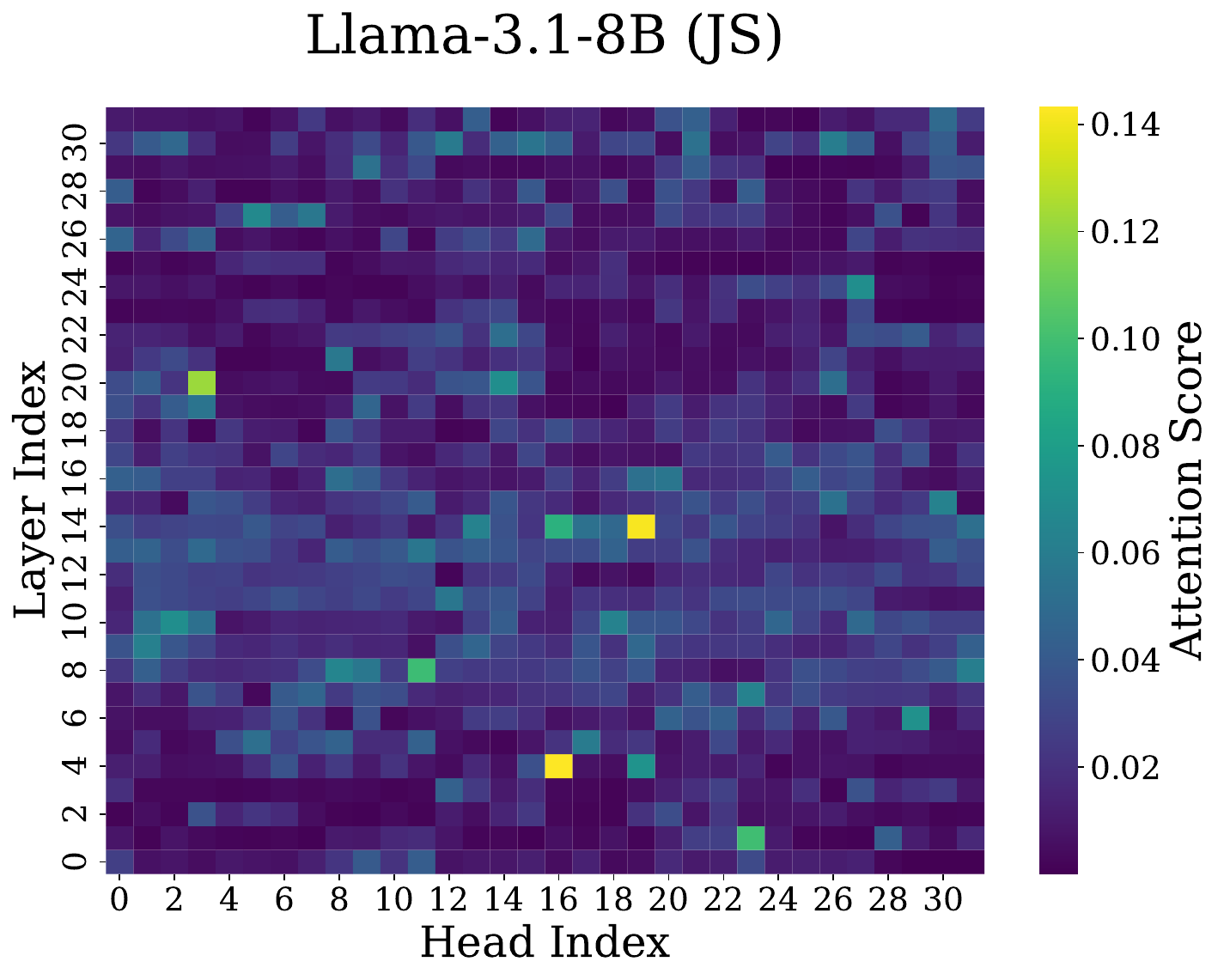}
    \hfill
    \includegraphics[width=0.48\columnwidth]{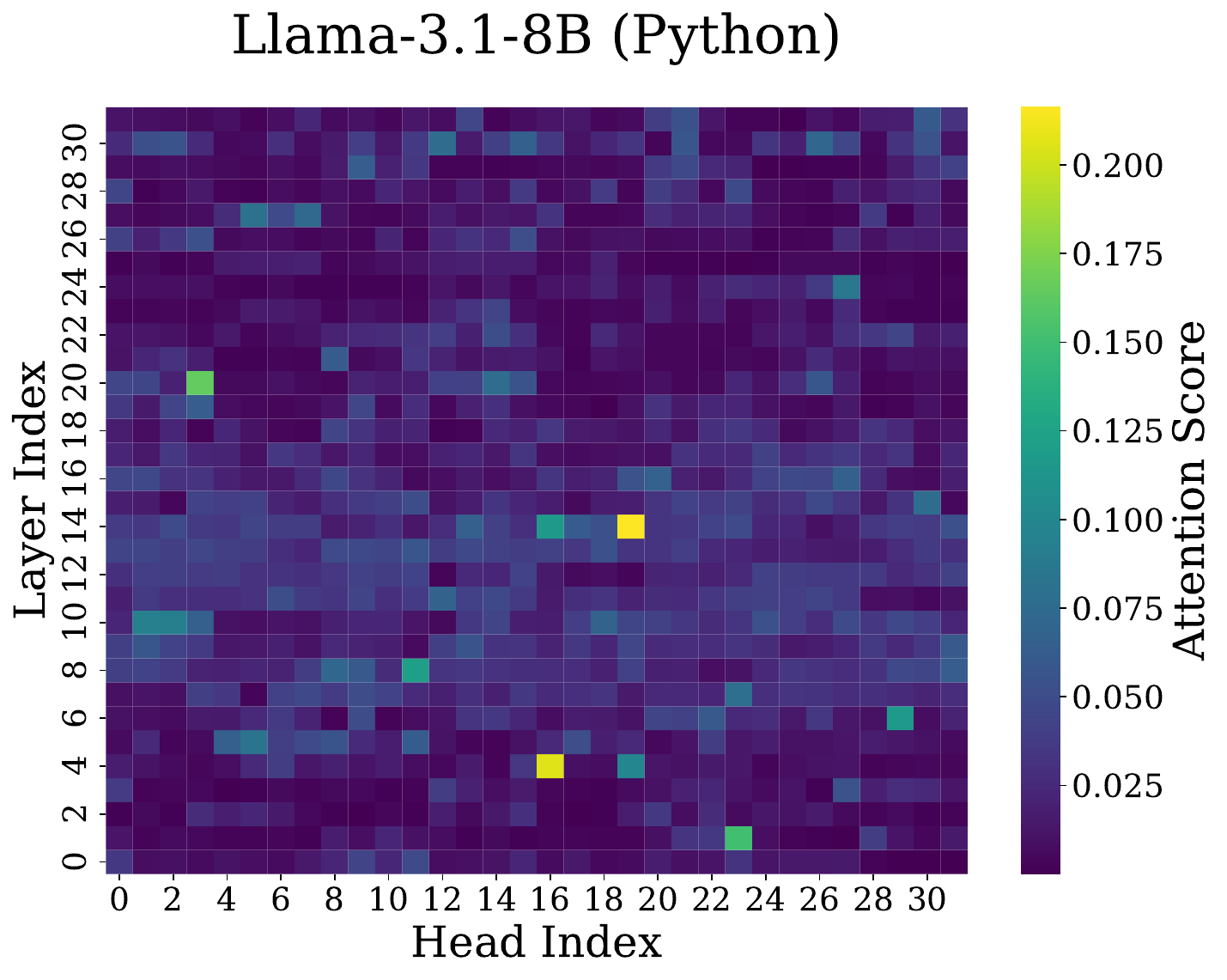}
    \caption{\textbf{Existence and Universality of Semantic Binding.}
Attention heatmaps ($S_{l,h}$) for Llama-3.1-8B on \textbf{(Left)} JavaScript and \textbf{(Right)} Python. The consistent emergence of these specific heads across syntactically diverse languages supports the existence of shared language-agnostic binding patterns across programming languages.}
    \label{fig:llama_heatmap}
  \end{center}
  \vskip -0.2in
\end{figure}

\paragraph{Causal Verification via Head Ablation.}
To confirm the functional role of these heads, we performed an ablation on the \textit{Variable Cloze Completion} task by masking the Top-8 binding heads against a random-8 control. As shown in Table~\ref{tab:main_results}, while random masking caused a minor performance decline, ablating the binding heads led to a substantially larger drop. This significant gap suggests that these specific circuits play an important functional role in reference resolution.

\paragraph{Contrastive Tuning and Results.}
We fine-tuned the model using the osed alignment objective $\mathcal{L}_{total}$ with LoRA \cite{hu2022lora}, applying the contrastive steering term exclusively to the Top-8 Binding Heads identified through profiling. This targeted intervention is designed to amplify the model’s naturally emerging binding circuits, rather than imposing semantic structure on arbitrary components. As shown in Table~\ref{tab:main_results}, this geometrically aligned strategy yields a substantially larger performance gain than pure cross-entropy training: the improvement achieved by our method is more than twice the gain obtained from cross-entropy alone, suggesting that explicitly sharpening attention geometry can be more sample-efficient than implicit likelihood maximization in deep dependency learning tasks. All detailed experiment methods and settings can be seen in App. \ref{app:our_loss}.

\paragraph{Cross-Language Generalization.}
Finally, to test whether our method captures fundamental binding logic rather than surface-level syntax, we evaluate cross-lingual transfer. We apply the Python-tuned model to JavaScript and the JavaScript-tuned model to Python, without any additional fine-tuning. In both directions, our method consistently outperforms the baseline (see App.~\ref{app:cross_lingual}). This cross-lingual generalization indicates that the Semantic Binding Circuits reinforced by our objective are abstract and language-agnostic, reflecting the shared topology of variable reference rather than language-specific syntax.

\subsection{Generalization to Natural Language}

To evaluate whether the proposed mechanism extends beyond code, we further study natural-language deep dependency learning through pronoun and coreference resolution. We map the notion of semantic dependency in code (i.e., USE--DEF relations) to antecedent--reference relations in natural language, where resolving a pronoun requires tracking latent relational structure across context.

Following this formulation, we construct training pairs using AllenNLP-based coreference extraction~\citep{gardner2018allennlp,lee2018higher}. Given a reference mention (e.g., ``his home''), we identify its antecedent span (e.g., ``the house'') and apply the same topology-aligned contrastive objective used in the code setting. To account for potential extraction noise in natural language, we additionally weight each pair by the confidence score returned by the coreference model.

\begin{table}[t]
\caption{\textbf{Generalization to Natural Language.}
We evaluate whether the proposed topology-aligned objective extends beyond code by fine-tuning on natural-language coreference supervision and evaluating on WinoGrande.}
\label{tab:winogrande}
\begin{center}
\begin{small}
\begin{sc}
\begin{tabular}{lccc}
\toprule
Model & Base & SFT & Ours \\
\midrule
Qwen2.5-7B & 0.679 & 0.845 & \textbf{0.874} \\
LLaMA-3.1-8B & 0.710 & 0.863 & \textbf{0.896} \\
\bottomrule
\end{tabular}
\end{sc}
\end{small}
\end{center}
\vskip -0.2in
\end{table}

We evaluate on the WinoGrande benchmark~\citep{sakaguchi2021winogrande}, a widely used benchmark for pronoun resolution and deep contextual dependency tracking. Results are shown in Table~\ref{tab:winogrande}. Across both Qwen2.5-7B and Llama3-8B, our method consistently improves over standard supervised fine-tuning, suggesting that the proposed optimization mechanism generalizes beyond code-specific structure.

\section{Related Works}
\paragraph{Attention Mechanisms.} Existing research characterizes attention via mechanistic interpretability \cite{clark2019does, li2020does, jawahar2019does, olsson2022context, nanda2023progress, conmy2023towards, meng2022locating, geva2021transformer, merullo2023circuit, rai2024practical}, training dynamics \cite{tian2023scan, ma2023graph, dherin2025learning, zhang2025training, teehan2022emergent, haviv2023understanding, geshkovski2023emergence, kobayashi2024weight}, and theoretical connections to corpus statistics \cite{ im2026transformers, dai-etal-2024-representational, prakash2024finetuning}. Unlike these, we specifically address the unspecified conditions for realizing content-independent relational operators and reliable pointer-like binding.

\section{Conclusion}

In this work, we proposed a mechanistic framework for semantic emergence, characterizing it as a spectral competition where high-curvature syntax initially masks deep dependency learning via gradient starvation. Our experiments, spanning from few-layer toy models trained on synthetic data to production-scale LLMs, support the hypothesis that deep dependency learning can exhibit a sharp phase transition associated with the subspace alignment ratio. Crucially, we demonstrated that this process can be influenced through targeted interventions: methods such as CoT supervision and our topology-aligned contrastive objective help alleviate the syntactic barrier by rectifying gradient geometry. These findings motivate further exploration of pre-training strategies that move beyond simple likelihood maximization toward data curation and objectives that more explicitly encourage deep relational structure beyond surface statistical regularities.

\section*{Acknowledgments}

This work was supported by the State Key Lab. for Novel Software Technology (KFKT2024B06).

\section*{Impact Statement}
This paper presents work whose primary goal is to advance the field of Machine Learning by bridging the gap between mechanistic interpretability and model steering. By establishing a theoretical link between gradient dynamics and the emergence of reasoning circuits, we offer a framework for more transparent and controllable pre-training.

However, practical application involves trade-offs. First, due to circuit polysemanticity, attention heads in LLMs often encode multiple entangled features; aggressive fine-tuning of specific binding heads implies a risk of interfering with other latent capabilities. Second, our topology-aligned objective introduces computational overhead, as constraining internal activations significantly increases memory consumption (VRAM) during training compared to standard objectives. Finally, our experiments on LLMs are concentrated in the code domain. We utilize code because it offers explicit ground-truth for semantic dependencies (via ASTs), whereas real-world natural language lacks such unambiguous topological annotations. Future work is required to develop robust probing and steering methods that can handle the linguistic ambiguity inherent in human communication.

% In the unusual situation where you want a paper to appear in the
% references without citing it in the main text, use \nocite
\nocite{langley00}

\bibliography{example_paper}
\bibliographystyle{icml2026}

%%%%%%%%%%%%%%%%%%%%%%%%%%%%%%%%%%%%%%%%%%%%%%%%%%%%%%%%%%%%%%%%%%%%%%%%%%%%%%%
%%%%%%%%%%%%%%%%%%%%%%%%%%%%%%%%%%%%%%%%%%%%%%%%%%%%%%%%%%%%%%%%%%%%%%%%%%%%%%%
% APPENDIX
%%%%%%%%%%%%%%%%%%%%%%%%%%%%%%%%%%%%%%%%%%%%%%%%%%%%%%%%%%%%%%%%%%%%%%%%%%%%%%%
%%%%%%%%%%%%%%%%%%%%%%%%%%%%%%%%%%%%%%%%%%%%%%%%%%%%%%%%%%%%%%%%%%%%%%%%%%%%%%%
\newpage
\appendix
\onecolumn

\section{More Experiment Results}
All experiments in this section are performed with Qwen2.5-Coder-7B.
\subsection{Experimental Verification of Spectral Disparity}
\label{app:exp_curvature}

To validate the Spectral Disparity Assumption (\ref{ass:spectral}), we probe the local geometry of the loss landscape using \textit{Iso-Energy Perturbations}. We hypothesize that the Hessian eigenspectrum is highly anisotropic, with syntactic directions corresponding to large eigenvalues (high curvature) and semantic directions corresponding to small eigenvalues (low curvature).

\paragraph{Methodology.}
We rely on the local quadratic approximation of the loss landscape. Around a local minimum, the change in loss under a perturbation $\delta$ is dominated by the Hessian $H$: $\Delta \mathcal{L} \approx \frac{1}{2} \delta^T H \delta$.
To compare the spectral properties (curvature) of different subspaces, we apply perturbations $\delta = \alpha \cdot v$ along directions $v$ sampled strictly from the syntactic subspace ($\mathcal{U}_{syn}$) or the semantic subspace ($\mathcal{U}_{sem}$).
Crucially, we enforce \textbf{normalization} ($\|v_{syn}\| = \|v_{sem}\| = 1$). Under this constraint, the loss change $\Delta \mathcal{L}(\alpha)$ becomes a direct proxy for the Rayleigh quotient $v^T H v$, which measures the curvature along direction $v$. Steeper parabolic growth implies larger Hessian eigenvalues.

\paragraph{Results: Empirical Validation of Assumption \ref{ass:spectral}.}
The experimental results are visualized in Figure \ref{fig:curvature_exp}. The plot reveals a stark contrast in the local geometry of the two subspaces:

\begin{itemize}
    \item \textbf{Syntactic Curvature (High $\lambda_{syn}$):} The loss landscape along syntactic directions (red curve) exhibits a sharp, steep valley. Quantitatively, a small perturbation of magnitude $|\alpha|=0.2$ results in a substantial loss increase of $\Delta \mathcal{L} \approx 1.4$. This rapid growth indicates that the Hessian $H$ possesses large eigenvalues along these eigenvectors, confirming the high curvature of $\mathcal{U}_{syn}$.
    
    \item \textbf{Semantic Curvature (Low $\lambda_{sem}$):} In contrast, the landscape along semantic directions (blue curve) is remarkably flat, forming a ``plateau." The loss increase is negligible ($\Delta \mathcal{L} < 0.2$) within the same perturbation range. This empirically confirms that the semantic subspace is associated with near-zero eigenvalues ($\lambda_{sem} \approx 0$).
\end{itemize}

\textbf{Conclusion.}
The experiment yields a curvature ratio $\lambda_{syn} / \lambda_{sem} \gg 1$, providing direct physical evidence for the \textbf{Spectral Disparity} postulate in Assumption \ref{ass:spectral}.

\begin{figure}[ht]
  \begin{center}
    \centerline{\includegraphics[width=0.6\columnwidth]{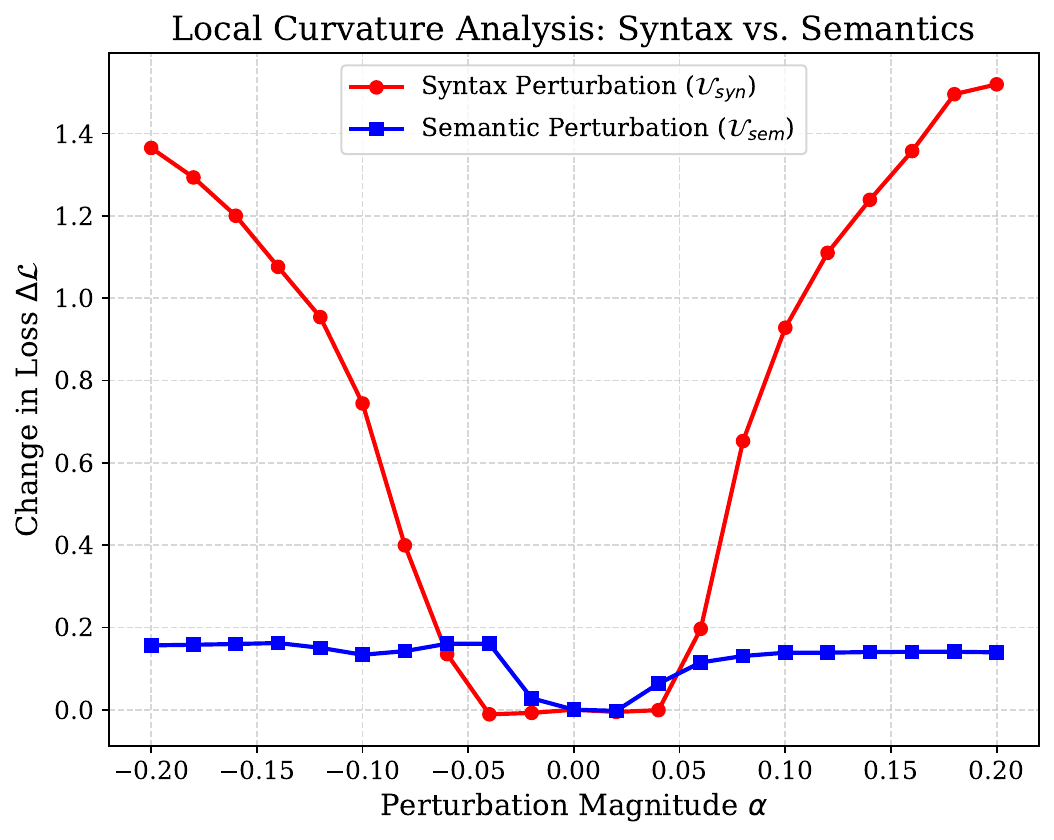}}
    \caption{
      Loss landscape cross-section. The red curve (Syntax) shows significantly higher curvature (steeper valley) than the blue curve (Semantics), indicating $\lambda_{syn} \gg \lambda_{sem}$.
    }
    \label{fig:curvature_exp}
  \end{center}
  \vskip -0.2in
\end{figure}

\subsection{Validation of Low-Rank Topological Operators}
\label{app:low_rank_exp}

We experimentally test the core prediction of Prop. \ref{prop:hebbian}: that semantic binding circuits collapse into rank-1 topological operators ($M_{sem} \approx \mu_{use}\mu_{src}^T$), whereas syntactic processing remains high-rank.

\paragraph{Methodology.}
We analyze the singular value spectrum of the interaction matrices $W_{QK}^{(h)} \in \mathbb{R}^{d \times d}$ for all attention heads in Qwen2.5-Coder-7B. We utilize the \textbf{Stable Rank} (Effective Rank) to quantify the spectral concentration of the spectrum:
$$ r_{eff}(W_{QK}) = \frac{\|\sigma\|_2^2}{\|\sigma\|_\infty^2} = \frac{\sum_k \sigma_k^2}{\sigma_{\max}^2} $$
A value $r_{eff} \to 1$ implies the matrix is dominated by a single principal component (Rank-1).

\paragraph{Results.}
The spectral analysis is visualized in Figure \ref{fig:low_rank_results}.

\begin{figure}[ht]
  \begin{center}
    \centerline{\includegraphics[width=1\columnwidth]{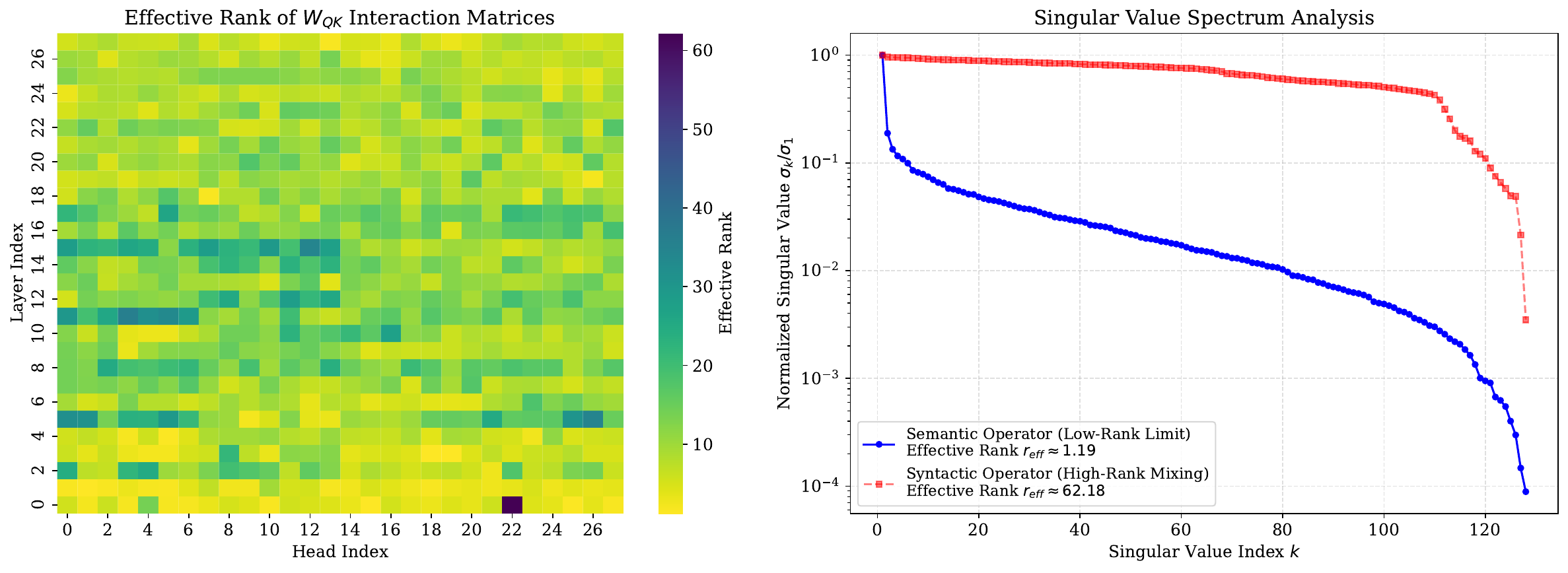}}
    \caption{\textbf{Spectral Analysis of Interaction Matrices.} \textbf{Left:} Heatmap of effective rank across all layers and heads. The emergence of semantic operators is sparse; only specific heads (dark blue) collapse to the low-rank regime. \textbf{Right:} Singular value spectrum comparison. The Semantic Operator exhibits extreme spectral collapse with $r_{eff} \approx 1.19$, implying it functions as a precise Rank-1 pointer. The Syntactic Operator retains a high-dimensional mixing structure ($r_{eff} \approx 62.18$).}
    \label{fig:low_rank_results}
  \end{center}
  \vskip -0.2in
\end{figure}

The empirical evidence strongly supports Prop. \ref{prop:hebbian}:
\begin{enumerate}
    \item \textbf{Rank Collapse (Semantic):} As shown in the right panel (Blue Curve), the semantic head exhibits a single component dominated spectrum. The first singular value dominates the energy, resulting in an effective rank of $r_{eff} \approx 1.19$. This confirms that the learned matrix $W_{QK}$ has effectively converged to the rank-1 outer product $\mu_{use}\mu_{src}^T$.
    \item \textbf{High-Dimensional Mixing (Syntactic):} In contrast, the syntactic head (Red Curve) shows a heavy-tailed spectrum with $r_{eff} \approx 62.18$, indicating diffuse information processing typical of distributed representations.
    \item \textbf{Sparsity:} The heatmap (Left Panel) validates Corollary 3.9. Low-rank operators (dark blue pixels) are rare and localized, confirming that the binding mechanism is specialized to a sparse subset of heads.
\end{enumerate}

\subsection{Validation of Systematic Gradient Emergence}
\label{app:gradient_alignment}

Assumption \ref{ass:uncorrelated_gamma} posits that the error signal $\gamma$ eventually decouples from context noise and aligns with a systematic semantic direction. To verify this denoising effect, we analyze the geometric coherence of gradient vectors across different layers of the model.

\paragraph{Methodology.}
We collect gradient vectors $g = \nabla_{h} \mathcal{L}$ with respect to the hidden states for specific semantic targets (e.g., variable assignment operations) across $N=50$ diverse contexts. We employ two metrics to quantify gradient coherence:
\begin{enumerate}
    \item \textbf{PCA Explained Variance (Top-1):} We compute the Principal Component Analysis of the gradient set $\{g_i\}_{i=1}^N$. A high explained variance ratio (EVR) for the first component indicates that the gradients are essentially 1-dimensional (pointing in a single semantic direction) rather than diffuse.
    \item \textbf{Pairwise Cosine Similarity:} We measure the average cosine alignment $\frac{1}{N^2}\sum_{i \neq j} \cos(g_i, g_j)$. High similarity implies the learning signal is consistent across different contexts (context-independent).
\end{enumerate}

\paragraph{Results.}
Figure \ref{fig:grad_alignment} illustrates the evolution of gradient geometry across network depth.

\begin{figure}[ht]
  \begin{center}
    \centerline{\includegraphics[width=0.6\columnwidth]{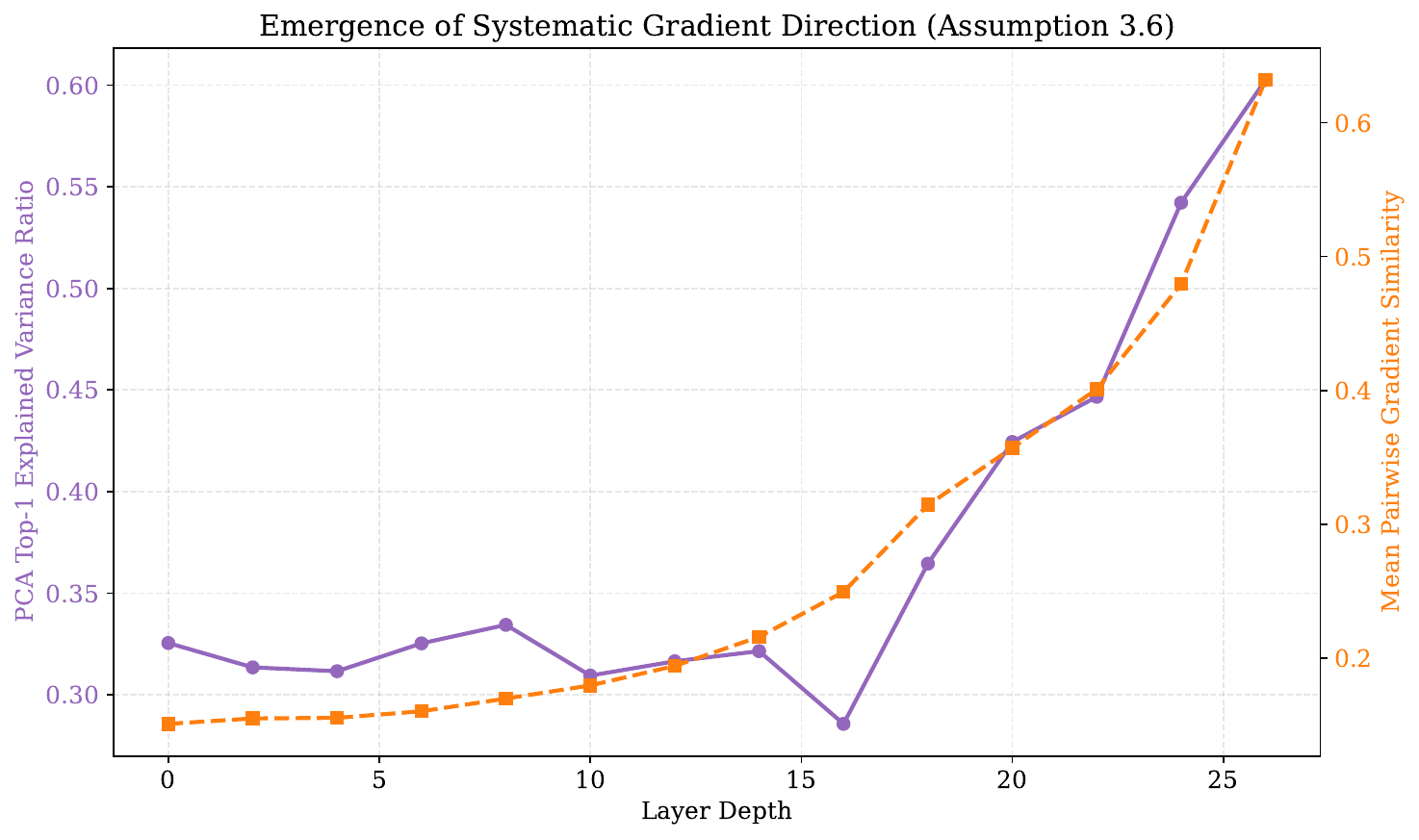}}
    \caption{\textbf{Emergence of Systematic Gradient Direction.} We plot the geometric properties of the backpropagated error signal against layer depth.
    \textbf{purple{Purple Line (PCA EVR):}} The explained variance of the first principal component rises from $\approx 0.32$ in shallow layers to $>0.60$ in deep layers. This indicates a \textit{dimensionality collapse}: the error signal transitions from high-dimensional noise to a rank-1 systematic vector.
    \textbf{orange{Orange Line (Cosine Similarity):}} The mean pairwise alignment increases monotonically, confirming that the gradient direction becomes robust to context noise as depth increases. This validates the noise-orthogonality postulate in Assumption \ref{ass:uncorrelated_gamma}.}
    \label{fig:grad_alignment}
  \end{center}
  \vskip -0.2in
\end{figure}

The data confirms a phase transition in the error signal quality. In early layers ($L < 15$), gradients are noisy and context-dependent (low coherence). In deeper layers ($L > 20$), the gradients lock onto a single dominant direction. This Spectral Concentration ensures that the weight update $\Delta W \propto \gamma x^T$ accumulates constructively along the topological edge, enabling the emergence of the binding circuit.

\subsection{Visualizing the Variable Binding Subspace}
\label{app:subspace_geometry}

Our theory predicts that semantic binding occurs within a specialized low-dimensional subspace where variable identities are disentangled from their values and contexts. To visualize this \textit{Binding Subspace}, we extract hidden states corresponding to variable references and project them into 2D space.

\paragraph{Methodology.}
We generate $N=1000$ random symbolic programs involving 26 variables ($a \dots z$). We extract the activation vectors $h \in \mathbb{R}^{d_{model}}$ at the exact token positions where a variable is referenced. To isolate the binding-relevant geometry from the high-dimensional residual stream, we apply a two-step dimensionality reduction:
\begin{enumerate}
    \item \textbf{Supervised Filtering:} We train a sparse linear probe (L1-regularized Logistic Regression) to predict the variable identity from $h$. We select the top $K=30$ dimensions with the highest coefficients, effectively restricting the representation to the binding-relevant subspace.
    \item \textbf{Manifold Projection:} We apply UMAP (Uniform Manifold Approximation and Projection) on these 30 dimensions to visualize the local topology.
\end{enumerate}

\paragraph{Results.}
The projected manifold is shown in Figure \ref{fig:binding_subspace}.

\begin{figure}[ht]
  \begin{center}
    \centerline{\includegraphics[width=0.6\columnwidth]{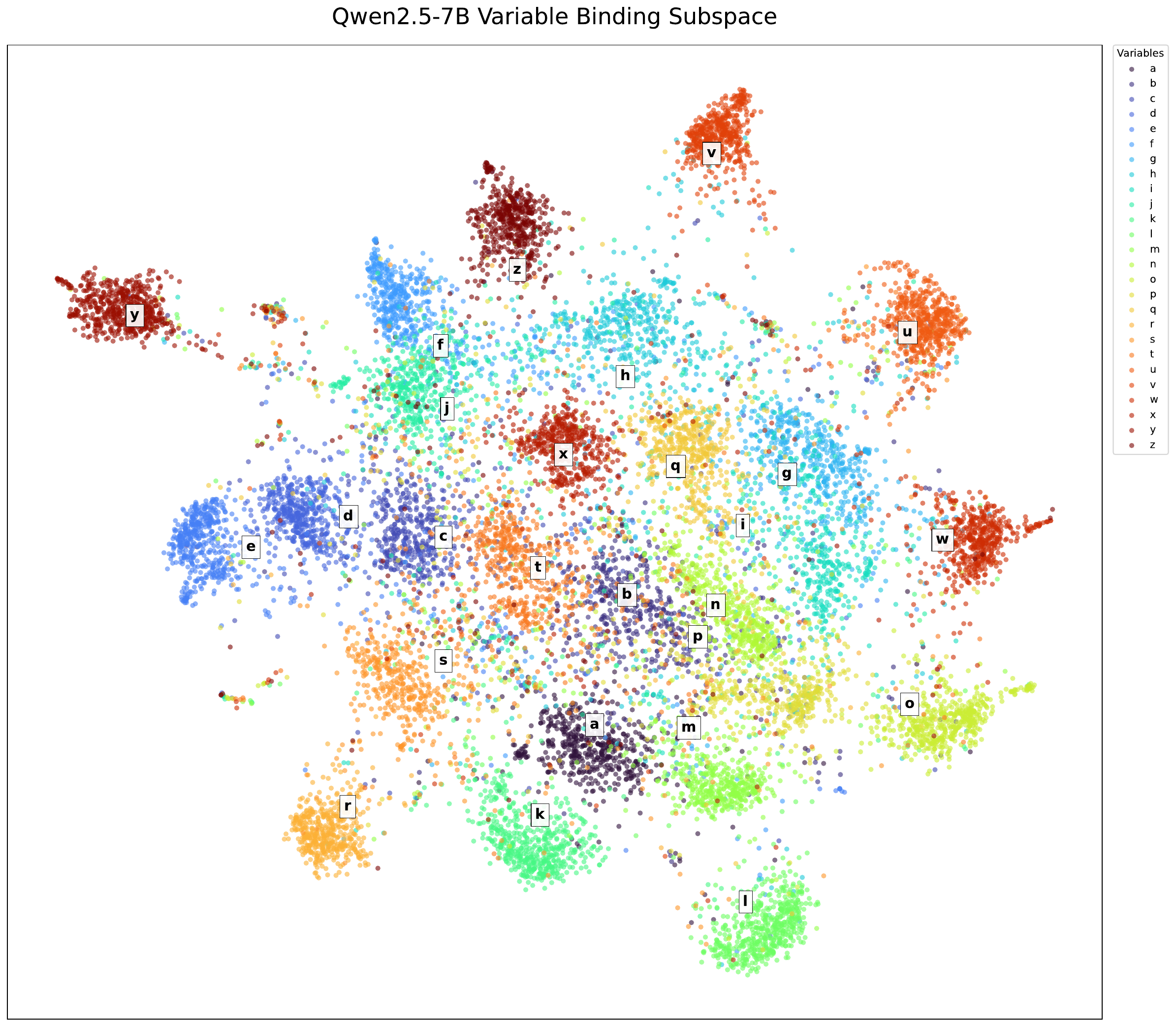}}
    \caption{\textbf{The Geometry of Variable Binding.} We visualize the internal representation of 26 distinct variables ($a \dots z$) at Layer 20 of Qwen2.5-7B. Each point represents a single occurrence of a variable in a unique random context. 
    We observe clear, distinct \textbf{semantic clusters} for each variable (e.g., the red cluster 'v' at the top, the green cluster 'l' at the bottom). 
    Despite the randomness of the surrounding code (contexts), the model maps all instances of the same variable to a tightly packed region in the binding subspace. This geometric disentanglement confirms that the model has learned abstract ``Variable Types" as robust topological objects, independent of surface-level syntax.}
    \label{fig:binding_subspace}
  \end{center}
  \vskip -0.2in
\end{figure}

The visualization confirms that the ``semantic subspace" $\mathcal{U}_{sem}$ is not hypothetical but a concrete, extractable structure. The distinct islands for each variable demonstrate that the model has successfully factored the factor $P(variable|context)$ into a context-independent code, a prerequisite for the reliable pointer-passing mechanisms described in Remark \ref{rem:pointer_priority}.

\subsection{Structural Sparsity in Qwen}
\label{app:additional_heatmaps}

\begin{figure}[ht]
  \begin{center}
    \includegraphics[width=0.35\columnwidth]{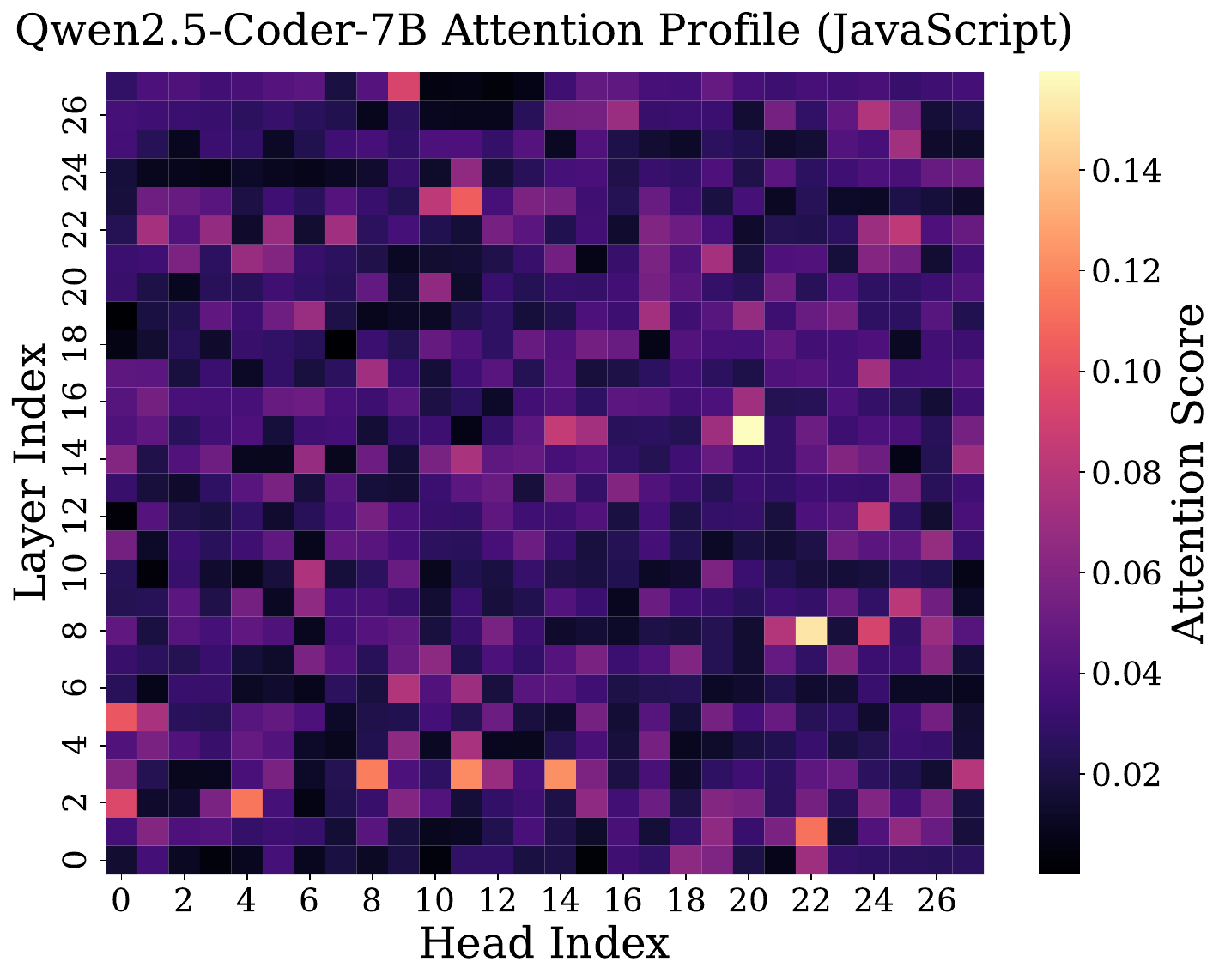}
    \quad \quad
    \includegraphics[width=0.35\columnwidth]{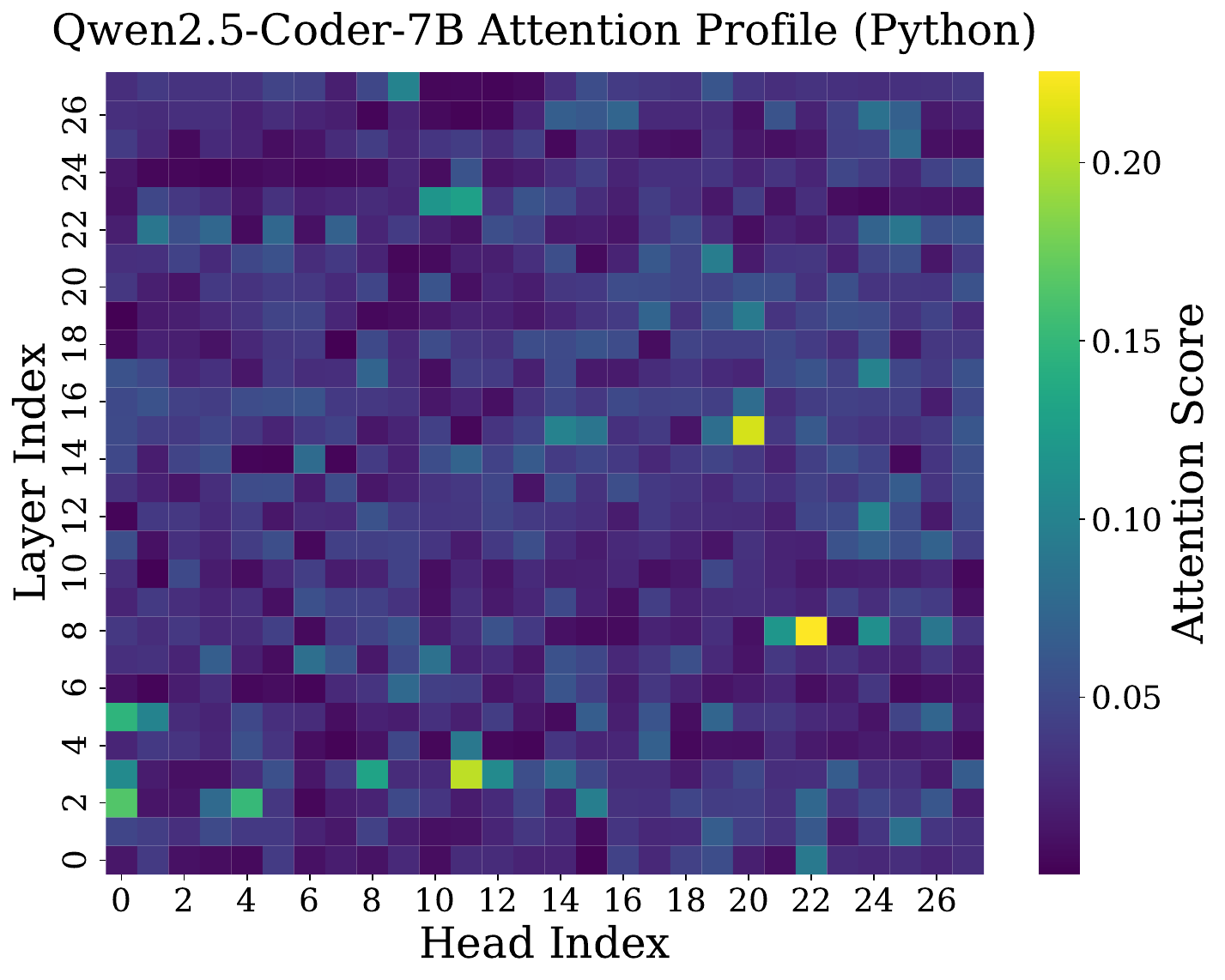}
    \caption{\textbf{Existence and Universality of Semantic Binding.}
Attention heatmaps ($S_{l,h}$) for Qwen2.5-Coder-7B on \textbf{(Left)} JavaScript and \textbf{(Right)} Python.}
    \label{fig:qwen_heatmap}
  \end{center}
  \vskip -0.2in
\end{figure}

\subsection{Cross-Language Generalization}
\label{app:cross_lingual}
The results are reported in Table~\ref{tab:cross_language}.
\begin{table}[t]
  \caption{Cross-language generalization performance.
  Models are trained on one programming language and evaluated on the other without additional fine-tuning.
  Our method consistently outperforms both Base and SFT across models and directions.}
  \label{tab:cross_language}
  \begin{center}
    \begin{small}
      \begin{sc}
        \begin{tabular}{llcc}
          \toprule
          Train $\rightarrow$ Test & Method & LLaMA & Qwen \\
          \midrule
          \multirow{3}{*}{Python $\rightarrow$ JavaScript}
            & Base & 0.304 & 0.861 \\
            & SFT  & 0.300 & 0.861 \\
            & Ours & \textbf{0.312} & \textbf{0.869} \\
          \midrule
          \multirow{3}{*}{JavaScript $\rightarrow$ Python}
            & Base & 0.335 & 0.890 \\
            & SFT  & 0.337 & 0.887 \\
            & Ours & \textbf{0.342} & \textbf{0.892} \\
          \bottomrule
        \end{tabular}
      \end{sc}
    \end{small}
  \end{center}
  \vskip -0.1in
\end{table}

\section{Experiment Settings}

\subsection{Experiments in Sec. \ref{sec:experiments}}
\label{app:experiments_set}
We utilize a causal ``NanoTransformer'' architecture trained from scratch for all experiments. The model is a standard decoder-only Transformer with pre-normalization (LayerNorm applied before the self-attention and feed-forward blocks), GeLU activation, and absolute positional embeddings. The vocabulary size is fixed at $V=128$, partitioned into syntactic tokens (e.g., \texttt{LET}, \texttt{=}, \texttt{;}, \texttt{RETURN}), variable identifiers ($v_0 \dots v_{19}$), and numerical values.

All models were trained using the AdamW optimizer on a single NVIDIA 3080Ti GPU. To precisely control the dynamics, we employed a fixed seed ($42$) for the final reporting of results, particularly for the causal interventions in Experiment 3. The specific hyperparameters for each experimental setting are detailed in Table~\ref{tab:hyperparams}.

All runs in this experiment utilize the same initialization seed (42).

\begin{table}[h]
  \caption{Hyperparameter configurations for the three controlled experiments. Note that Experiment 2 uses a reduced model capacity to temporally extend the phase transition for better visualization.}
  \label{tab:hyperparams}
  \begin{center}
    \begin{small}
      \begin{sc}
        \begin{tabular}{lccc}
          \toprule
          Hyperparameter & Exp 1 (Gradient) & Exp 2 (Curriculum) & Exp 3 (Causal) \\
          \midrule
          Layers ($L$)         & 4           & 3            & 4 \\
          Model Dim ($d_{model}$) & 128         & 64           & 128 \\
          Attn Heads ($H$)     & 4           & 4            & 4 \\
          Seq Length ($T$)     & 64          & 48           & 48 \\
          Learning Rate ($\eta$) & $1.0 \times 10^{-3}$ & $8.0 \times 10^{-4}$ & $1.0 \times 10^{-3}$ \\
          Batch Size           & 128         & 128          & 128 \\
          Max Steps            & 5000        & 3500         & 1500 \\
          \bottomrule
        \end{tabular}
      \end{sc}
    \end{small}
  \end{center}
  \vskip -0.1in
\end{table}

\paragraph{Experiment 1: Visualizing Gradient Starvation.}
The objective of this experiment is to monitor the L2 norms of gradients at the embedding layer during the initial learning phase. The dataset generator produces \textit{Pointer Binding} sequences involving variable assignment (e.g., \texttt{LET v1 = 5;}) and retrieval (\texttt{RETURN v1}). We define two token groups for monitoring: $\mathcal{T}_{syn}$ comprising syntax tokens (indices $0\dots9$) and $\mathcal{T}_{sem}$ comprising variable identifiers (indices $10\dots29$). We record the average gradient norm $\|\nabla_{E} \mathcal{L}\|_2$ for these subgroups at every step to visualize the delay in semantic learning relative to syntactic structure.

\paragraph{Experiment 2: Escaping Softmax Saturation (Curriculum).}
To investigate the how early attention concentration on syntactic tokens suppresses semantic learning and subsequent release, we employ a curriculum learning setup. The model is initially trained on a variable assignment (copy) task. At step $t=1500$, the data generator switches to an arithmetic addition task (\texttt{RETURN v1 + v2}). To capture the phase transition mechanics, we use a smaller model capacity ($d_{model}=64$) and a reduced learning rate ($8\times10^{-4}$). We monitor the attention weights $A_{QK}$ of the last layer, specifically tracking the attention mass allocated to operand tokens versus syntactic anchors. High-frequency sampling is applied around the task switch (steps 1475--1600) to resolve the crossover dynamics.

\paragraph{Experiment 3: Causal Validation via Intervention.}
This experiment validates the causal role of the alignment ratio $\rho(t)$ using a \textit{Double Dissociation} protocol. We compare a standard control run against two intervention conditions using a Moderate Noise Generator.
(1) \textbf{Syntax-Suppressed:} We artificially mask the gradients of syntactic tokens at the embedding layer ($\nabla_{syn} \leftarrow 0$) during backpropagation. This intervention removes the ``noise'' from the syntactic subspace, theoretically increasing $\rho(t)$ and accelerating learning.
(2) \textbf{Variable-Disturbed:} We inject Gaussian noise into the gradients of variable tokens ($\nabla_{var} \leftarrow \nabla_{var} + \mathcal{N}(0, 1)$). This perturbation disrupts the semantic alignment direction $M_{sem}$, decreasing $\rho(t)$ and delaying the emergence of reasoning.

\subsection{Experiments in Sec. \ref{sec:Dynamics}}
\label{app:dynamics_set}

We detail the experimental setup for the three mechanistic analyses performed on the Pythia model family. All experiments were conducted using the HuggingFace \texttt{transformers} library with \texttt{GPTNeoXForCausalLM} architectures. The analysis spans a training trajectory from initialization to convergence, utilizing Pythia-160m for gradient-sensitive probes and Pythia-1.4b for structural attention analysis.

\paragraph{Model Configuration and Sampling.}
To capture the distinct phases of emergence, we employ different checkpoint granularities. For the alignment dynamics (Exp 1), we use a dense sampling strategy to capture the phase transition. For the structural and CoT analyses (Exp 2 \& 3), we utilize three pivotal checkpoints ($t \in \{1\text{k}, 20\text{k}, 143\text{k}\}$) representing the heuristic phase, the transition point, and convergence. Table \ref{tab:model_specs} summarizes these configurations.

\begin{table}[t]
  \caption{Model specifications and checkpoint granularity. Exp 1 uses dense sampling to capture the phase transition curve, while Exp 2 and 3 focus on snapshot comparisons.}
  \label{tab:model_specs}
  \begin{center}
    \begin{small}
      \begin{sc}
        \begin{tabular}{lccc}
          \toprule
          Parameter & Exp 1: Alignment & Exp 2: Pointers & Exp 3: CoT \\
          \midrule
          Model & Pythia-160m & Pythia-1.4b & Pythia-160m \\
          Hidden Dim & 768 & 2048 & 768 \\
          Sampling & Dense (7 steps) & Sparse (3 steps) & Sparse (3 steps) \\
          Checkpoints & 1k, 5k, \dots, 143k & 1k, 20k, 143k & 1k, 20k, 143k \\
          Precision & FP16 & FP16 & FP16 \\
          \bottomrule
        \end{tabular}
      \end{sc}
    \end{small}
  \end{center}
  \vskip -0.1in
\end{table}

\paragraph{Exp 1: Probing the Alignment Ratio $\rho(t)$.}
To validate the subspace competition hypothesis, we quantify the geometric alignment between the learned attention mechanism and the ideal semantic operation. We define the alignment ratio $\rho(t)$ as the normalized projection of the model's query-key interaction matrix $W_{QK}^{(t)}$ onto the ground-truth semantic operator $M_{sem}$:
\begin{equation}
\rho(t) = \frac{\langle W_{QK}^{(t)}, M_{sem} \rangle_F}{\|W_{QK}^{(t)}\|_F \|M_{sem}\|_F}
\end{equation}
The matrix $M_{sem}$ is constructed as a low-rank operator encoding the ideal dependency structure of the synthetic grammar. We compute this metric across dense checkpoints to visualize the non-monotonic transition from gradient starvation (where $\rho \approx 0$) to semantic locking.

\paragraph{Exp 2: Retrospective Head Matching.}
We track the evolution of pointer mechanisms by generating synthetic dependency tasks (Table \ref{tab:prompt_templates}). To isolate the relevant circuit, we employ a ``Time Travel'' protocol. We first identify the ``Process Head'' at the final checkpoint (Step 143k) that maximizes attention mass on the correct antecedent tokens ($L^*, H^*$). We then load earlier checkpoints (1k, 20k) and extract the attention patterns from this specific head indices, ensuring we are tracing the evolution of a fixed structural component rather than dynamically selecting the strongest head at each step.

\begin{table}[t]
  \caption{Procedural generation templates for mechanistic probes. Random variable names (e.g., \texttt{x9k}) are used in Exp 3 to prevent token frequency artifacts.}
  \label{tab:prompt_templates}
  \begin{center}
    \begin{small}
      \begin{sc}
        \begin{tabular}{lp{4.2cm}p{5.8cm}}
          \toprule
          Exp & Task Structure & Template / Logic \\
          \midrule
          Exp 2 & Natural Language & \texttt{The Alpha is the Bravo.} \\
                & (Transitive Identity) & Query: \texttt{\# The Alpha is the} $\to$ Target: \texttt{Bravo} \\
          \cmidrule{2-3}
                & Code & \texttt{a = 42; b = a; c = b;} \\
                & (Value Assignment) & Query: \texttt{\# value of b is} $\to$ Target: \texttt{a} \\
          \midrule
          Exp 3 & Hard Mode Shuffle & Interleaved chains ($A, B$) with random vars. \\
                & Implicit Probe & Context: \texttt{... x9k=q2z.} Q: \texttt{What is x9k?} \\
                & & Target: Root Value (Long-range jump) \\
          \cmidrule{2-3}
                & CoT Probe & Context: \texttt{... x9k=q2z.} Q: \texttt{x9k is copy of?} \\
                & & Target: \texttt{q2z} (Local hop) \\
          \bottomrule
        \end{tabular}
      \end{sc}
    \end{small}
  \end{center}
  \vskip -0.1in
\end{table}

\paragraph{Exp 3: Gradient Saliency and CoT Restoration.}
To verify the signal restoration hypothesis, we measure the gradient flow through the network using a Gradient Saliency Probe. For each checkpoint $t \in \{1\text{k}, 20\text{k}, 143\text{k}\}$, we compute the gradient of the correct target logit with respect to the input embeddings $E$. The selectivity score is defined as the log-ratio of the gradient norm at the causal source $x_{src}$ versus a distractor $x_{dist}$. We generate $N=60$ unique random graph instances for each depth $d \in \{10, \dots, 50\}$. In the Implicit setting, the model must attend directly to the distant root value (distractor: interleaved chain value); in the CoT setting, the model is prompted to identify the immediate parent (distractor: root value), testing its ability to utilize local gradient pathways.

\subsection{Our Loss}
\label{app:our_loss}

To enforce the mechanistic alignment between the model's latent topology and the code's semantic structure, we propose a composite objective function. This objective combines a syntax-aware generation loss with a geometric constraint on the attention mechanism.

\paragraph{Variable-Weighted Causal Modeling.}
Standard language modeling treats all tokens equally, often allowing the model to minimize loss via trivial syntactic patterns (e.g., punctuation) rather than resolving long-range dependencies. To counteract this, we introduce a re-weighted cross-entropy loss $\mathcal{L}_{\text{LM}}$. We define a weight vector $\mathbf{w} \in \mathbb{R}^T$ for a sequence of length $T$, where $w_t = \alpha$ if $x_t$ belongs to a variable identifier (definition or usage), and $w_t = 1$ otherwise. Based on our preliminary profiling, we set $\alpha = 5.0$ to force the gradient descent to prioritize variable binding accuracy:
\begin{equation}
\mathcal{L}_{\text{LM}} = - \frac{1}{\sum w_t} \sum_{t=1}^{T} w_t \log P(x_t \mid x_{<t}; \theta)
\end{equation}

\paragraph{Contrastive Attention Alignment.}
To explicitly construct the retrieval circuit, we impose a hinge loss on a subset of ``target heads'' $\mathcal{H}_{target}$. These heads are selected based on their pre-trained alignment scores (top $K=8$ heads). For a given variable usage token $q_{use}$, we require the attention mechanism to prioritize the corresponding definition token $k_{def}$ over random negative tokens $k_{neg}$. The alignment loss $\mathcal{L}_{\text{Align}}$ is defined as:
\begin{equation}
\mathcal{L}_{\text{Align}} = \frac{1}{|\mathcal{H}|} \sum_{h \in \mathcal{H}} \mathbb{E}_{q} \left[ \max\left(0, \gamma - \left( A_h(q_{use}, k_{def}) - A_h(q_{use}, k_{neg}) \right) \right) \right]
\end{equation}
where $A_h(\cdot)$ represents the attention weight, and we set the margin $\gamma = 0.2$. To prevent training instability, the weight of this term $\lambda(t)$ follows a linear warmup schedule, peaking at $\lambda_{max}=0.5$.

\paragraph{Dynamic Role Inference.}
Since the raw C\# training corpora lack explicit semantic labeling for ``Definition'' versus ``Usage,'' we implement a heuristic inference logic during data loading. For every unique identifier, we sort its occurrences by position; the first occurrence is labeled as \textsc{Def}, and all subsequent occurrences are labeled as \textsc{Use}. This allows us to construct the $(q_{use}, k_{def})$ pairs required for $\mathcal{L}_{\text{Align}}$ without requiring an external static analysis compiler.

\paragraph{Implementation and Training Details.}
We fine-tune \texttt{Qwen2.5-Coder-7B} using Low-Rank Adaptation (LoRA) to minimize memory overhead. We target all linear projection layers ($W_q, W_k, W_v, W_o$ and FFN gates) with rank $r=16$ and $\alpha=32$. The model is trained with a batch size of 1 and gradient accumulation steps of 32 (effective batch size 32). We use the AdamW optimizer with a learning rate of $2\times 10^{-5}$ and a cosine decay schedule with 10\% warmup. The maximum sequence length is set to 2048 tokens to accommodate long-context variable dependencies.

\section{Detailed Derivations and Theoretical Proofs}
\label{app:derivations}

In this appendix, we provide the formal derivations and strict assumptions for the mechanistic claims made in Section \ref{sec:mechanistic}.

\subsection{Formal Analysis of Gradient Starvation (Supplementary to Sec. \ref{sec:gradient_starvation})}
\label{app:gradient_starvation}

We rigorously justify Prop. \ref{prop:starvation} by defining the residual distribution and applying spectral bounds.

\subsubsection{Assumption on Residual Isotropy}
To formally compare gradient projections, we assume the initial residual $e = \theta - \theta^*$ is isotropic up to the second moment.

\begin{assumption}[Second-Order Isotropy]
\label{ass:isotropic_residual}
We assume the parameter residual $e$ follows a distribution $\mathcal{D}_e$ with zero mean and scaled identity covariance:
\begin{equation}
\mathbb{E}[e] = 0, \quad \mathbb{E}[e e^T] = \sigma_e^2 I
\end{equation}
Consequently, for any projection operator $P$ onto a subspace of dimension $m$, $\mathbb{E}[\|P e\|^2] = \sigma_e^2 m$.
\end{assumption}

\subsubsection{Proof of Starvation Ratio}
Consider the quadratic approximation $\mathcal{L}(\theta) \approx \mathcal{L}(\theta^*) + \frac{1}{2} e^T H e$. The gradient is $g = He$.
We analyze the expected squared gradient norm in the syntactic subspace $\mathcal{U}_{syn}$ spanned by eigenvectors with eigenvalues $\{\lambda_k\}_{k \in \mathcal{I}_{syn}}$.

\begin{proof}
Using the eigendecomposition $H = \sum_k \lambda_k v_k v_k^T$:
\begin{equation}
\mathbb{E}[\|g_{syn}\|^2] = \sum_{k \in \mathcal{I}_{syn}} \lambda_k^2 \cdot \mathbb{E}[(v_k^T e)^2] = \sigma_e^2 \sum_{k \in \mathcal{I}_{syn}} \lambda_k^2
\end{equation}
Applying spectral bounds with $\lambda_{syn}^{\min} = \min_{k \in \mathcal{I}_{syn}} \lambda_k$:
\begin{equation}
\mathbb{E}[\|g_{syn}\|^2] \ge m_{syn} (\lambda_{syn}^{\min})^2 \sigma_e^2
\end{equation}
Assuming spectral disparity $\lambda_{syn}^{\min} \gg \lambda_{sem}^{\max}$, the ratio of expected gradient norms satisfies:
\begin{equation}
\frac{\mathbb{E}[\|g_{syn}\|^2]}{\mathbb{E}[\|g_{sem}\|^2]} \ge \frac{m_{syn}}{m_{sem}} \left(\frac{\lambda_{syn}^{\min}}{\lambda_{sem}^{\max}}\right)^2 \gg 1
\end{equation}
This confirms that the gradient energy is dominated by the syntactic subspace.
\end{proof}

\subsection{Derivation of the Vanishing Gradient Barrier (Supplementary to Sec. \ref{sec:saturation})}
\label{app:softmax_saturation}

We derive the exponential decay bound using the correct chain rule.

\subsubsection{Derivative via Chain Rule}
The gradient w.r.t. a score $s_{sem}$ is given by the Jacobian of the softmax $A(s)$:
\begin{equation}
\frac{\partial \mathcal{L}}{\partial s_{sem}} = \sum_k \frac{\partial \mathcal{L}}{\partial A_k} \frac{\partial A_k}{\partial s_{sem}} = A_{sem} \left( \frac{\partial \mathcal{L}}{\partial A_{sem}} - \sum_k A_k \frac{\partial \mathcal{L}}{\partial A_k} \right)
\end{equation}

\subsubsection{Exponential Decay under Boundedness}
\begin{assumption}[Bounded Downstream Gradient]
We assume the downstream gradient is bounded: $\left| \frac{\partial \mathcal{L}}{\partial A_k} \right| \le C$.
\end{assumption}

\begin{proof}
Let $D_k = \frac{\partial \mathcal{L}}{\partial A_k}$. The term in parenthesis is bounded by $2C$.
In the Top-1 syntactic domination regime ($s_{syn} \gg s_{sem}$), we have $A_{sem} \approx e^{-(s_{syn} - s_{sem})}$. Thus:
\begin{equation}
\left| \frac{\partial \mathcal{L}}{\partial s_{sem}} \right| \le 2C \cdot A_{sem} = O\left( e^{-(s_{syn} - s_{sem})} \right)
\end{equation}
This confirms the exponential vanishing of the semantic learning signal.
\end{proof}

\subsection{Hebbian Reconstruction Analysis (Supplementary to Sec. \ref{sec:subspace_alignment})}
\label{app:hebbian_details}

We rigorously justify the rank-1 update direction by analyzing the sensitivity of the error signal to noise.

\subsubsection{Teacher Graph and Error Signal Signs}
We assume a generative model where $y_j = f(x_{i^*(j)})$ and $i^*(j)$ is the unique true dependency.
The parameter update is proportional to $-\nabla_{W} \mathcal{L} \approx -\sum_{j,i} \gamma_{ji} x_j x_i^T$.
For the correct edge, increasing attention decreases loss, so $\mathbb{E}[\gamma_{ji^*}] = -c$ with $c > 0$. For irrelevant edges, $\mathbb{E}[\gamma_{ji}] \approx 0$.

\subsubsection{Noise Sensitivity and Rank-1 Dominance}
We decompose embeddings as $x = \mu + \delta$, where $\delta$ is zero-mean noise with covariance $\Sigma_\delta$.
We assume the error signal $\gamma(x)$ is locally smooth w.r.t. noise.
\begin{assumption}[Weak Noise Sensitivity]
\label{ass:noise_sensitivity}
We approximate $\gamma$ via a first-order Taylor expansion around the signal $\mu$:
\begin{equation}
\gamma(\mu + \delta) \approx \gamma(\mu) + \nabla \gamma(\mu)^T \delta
\end{equation}
This implies that the correlation between error and noise is of second order: $\mathbb{E}[\gamma \delta] \approx O(\mathbb{E}[\|\delta\|^2])$.
\end{assumption}

Using this expansion, the expected update direction becomes:
\begin{equation}
\begin{aligned}
\mathbb{E}[-\nabla_{W} \mathcal{L}] &= \mathbb{E} \left[ \sum_j -(\gamma(\mu_j) + \epsilon) (\mu_j + \delta_j)(\mu_{i^*} + \delta_{i^*})^T \right] \\
&= \sum_j c \cdot \mu_j \mu_{i^*}^T + O(\mathbb{E}[\|\delta\|^2])
\end{aligned}
\end{equation}
The cross-terms (e.g., $\gamma \mu \delta^T$) vanish in expectation due to the independence of $\delta$ across positions or its zero mean.
Thus, the update is dominated by the rank-1 topological operator $\mu_{use} \mu_{src}^T$, with noise terms acting as a second-order perturbation.

\subsection{Derivation of the Alignment Threshold (Supplementary to Sec. \ref{sec:phase_transition})}
\label{app:phase_thresholds}

We derive the stability condition for $\rho(t)$. Let the gradient be $\nabla = S + N$, where $S$ is the systematic semantic direction and $N$ is zero-mean noise.
We explicitly assume the noise is uncorrelated with the signal in expectation:
\begin{equation}
\mathbb{E}[\langle S, N \rangle] = 0
\end{equation}
The expected alignment ratio (cosine similarity squared) is:
\begin{equation}
\mathbb{E}[\rho] \approx \frac{\|S\|^2}{\mathbb{E}[\|\nabla\|^2]} = \frac{\|S\|^2}{\|S\|^2 + \mathbb{E}[\|N\|^2] + 2\mathbb{E}[\langle S, N \rangle]} = \frac{1}{1 + \text{SNR}^{-1}}
\end{equation}
where $\text{SNR} = \|S\|^2 / \mathbb{E}[\|N\|^2]$. The threshold $\tau_{stable}$ corresponds to the critical SNR where the systematic signal $S$ dominates the variance of $N$.

\subsection{Formal Argument for Pointer Priority (Supplementary to Remark \ref{rem:pointer_priority})}
\label{app:pointer_priority}

We formalize the precedence of Pointer ($i \to j$) over Value ($i \to k$) based on sample complexity and signal accumulation.

\subsubsection{Mechanism vs. Fact}
\begin{itemize}
    \item Pointer (Mechanism): Relies on Type-level consistency ($N_{type}$ samples).
    \item Value (Fact): Relies on Instance-level consistency ($N_{inst}$ samples).
\end{itemize}
Clearly $N_{type} \gg N_{inst}$.

\subsubsection{SNR Scaling Law}
We model the learning process as signal accumulation under additive Gaussian noise.
Let $g$ be the per-sample gradient. The cumulative update over $N$ samples is $\Delta W \sim \sum_{n=1}^N (g_{signal} + \xi_{noise})$.
\begin{itemize}
    \item Signal magnitude: $\|\sum g_{signal}\| \propto N$
    \item Noise magnitude (Random Walk): $\|\sum \xi_{noise}\| \propto \sqrt{N}$
\end{itemize}
The effective Signal-to-Noise Ratio for the learned circuit scales as:
\begin{equation}
\text{SNR}(N) = \frac{\text{Signal}}{\text{Noise}} \propto \frac{N}{\sqrt{N}} = \sqrt{N}
\end{equation}

Comparing the Pointer and Value circuits:
\begin{equation}
\frac{\text{SNR}_{ptr}}{\text{SNR}_{val}} \approx \sqrt{\frac{N_{type}}{N_{inst}}} \gg 1
\end{equation}
Since $\text{SNR}_{ptr}$ grows significantly faster, $\rho_{ptr}(t)$ crosses the stability threshold $\tau_{stable}$ earlier than $\rho_{val}(t)$, ensuring the pointer mechanism emerges first.

\subsection{Accelerating Semantic Emergence via CoT}
\label{app:cot_theory}

While CoT is widely used for complex reasoning, its mechanistic basis remains under-explored. We show that CoT mitigates the Softmax Saturation barrier (Thm.~\ref{thm:barrier}) by making intermediate states explicit. At inference time, explicit traces expose latent states as context tokens, improving semantic retrieval. During training, supervising these traces adds a local loss term that provides a parallel gradient pathway, alleviating credit assignment bottlenecks.

\paragraph{Problem Formulation.}
Consider a compositional task $x \to y$ mediated by an intermediate sequence $z$
(e.g., $z$ is the resolved binding/value-fetch result, and $y$ is the final output).
\begin{itemize}
    \item \textbf{Implicit Reasoning:} The model outputs $y$ directly, i.e., $P(y\mid x)$, while the intermediate $z$ remains a latent computation.
    \item \textbf{CoT Prompting (Inference-time):} The model generates the joint sequence $(z, y)$, i.e.,
    $P(z,y\mid x)=P(z\mid x)\,P(y\mid x,z)$.
\end{itemize}

\paragraph{Inference-time CoT: Externalization as State Injection.}
Implicit reasoning must resolve multi-hop dependencies (e.g., $x \to z \to y$) within a single forward pass. Under softmax saturation (Thm.~\ref{thm:barrier}), attention can be dominated by syntactic tokens, making semantic predecessor retrieval unreliable and degrading $P(y\mid x)$. CoT prompting externalizes the intermediate state $z$ as an explicit token, converting the hard conditional $P(y\mid x)$ into two easier conditionals $P(z\mid x)$ and $P(y\mid x,z)$, which stabilizes downstream semantic attention.

\paragraph{Training-time CoT Supervision: Objective-level Injection.}
We now analyze the regime where intermediate traces $z$ are supervised during training,
as in CoT supervised fine-tuning or CoT distillation.
In this case, the training objective factorizes as:
\begin{equation}
\begin{aligned}
\mathcal{L}_{\mathrm{CoT\text{-}sup}}
&= -\log P(z,y \mid x) \\
&= -\log P(z \mid x) - \log P(y \mid x,z) \\
&= \mathcal{L}_z + \mathcal{L}_{y \mid z}.
\end{aligned}
\end{equation}
Unlike inference-time prompting, this objective introduces an explicit loss term $\mathcal{L}_z$ that directly supervises the intermediate state $z$, creating a parallel and more local gradient signal for binding-related parameters.

\begin{proposition}[Gradient Attenuation in Implicit Reasoning]
In the implicit setting, the loss $\mathcal{L}_{imp} = -\log P(y\mid x)$ depends on the binding-circuit parameters $\theta_{sem}$ only through the downstream computation $y = g(z)$.
Let $s_{sem}$ denote the binding-relevant score (e.g., an attention logit).
Using Jacobian notation (denominator layout), the chain rule gives:
\begin{equation}
\begin{aligned}
b
&:= J_{y}(z)^\top\,\nabla_{y}\mathcal{L}_{imp}, \\
\nabla_{\theta_{bind}} \mathcal{L}_{imp}
&= J_{s_{bind}}(\theta_{bind})^\top\,
   J_{z}(s_{bind})^\top\, b .
\end{aligned}
\end{equation}
where $J_{a}(b):=\frac{\partial a}{\partial b}$.
This path is vulnerable to two failure modes:
\begin{enumerate}
    \item \textbf{Downstream Insensitivity:}
    If the Jacobian $J_{y}(z)=\frac{\partial y}{\partial z}$ has small spectral norm, the signal can vanish before reaching $\theta_{sem}$.
    \item \textbf{Upstream Gating:}
    As shown in Lemma~\ref{lem:suppression}, the local derivative $J_{z}(s_{bind})=\frac{\partial z}{\partial s_{bind}}$ is multiplicatively suppressed under softmax saturation.
    In the syntactic phase, this term can effectively gate the entire gradient chain.
\end{enumerate}
\end{proposition}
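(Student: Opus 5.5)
The plan is to derive the displayed factorization by applying the multivariate chain rule along the implicit computation graph $\theta_{bind}\to s_{bind}\to z\to y\to\mathcal{L}_{imp}$. I will then read off the two failure modes as operator-norm bounds on the factors. First I will make the hypothesis explicit: in the implicit setting, $\mathcal{L}_{imp}$ depends on $\theta_{bind}$ only through $s_{bind}$, and $s_{bind}$ affects $y$ only through $z$, with $y=g(z)$. The two Jacobians that the factorization needs are $J_y(z)=\partial y/\partial z$ and $J_z(s_{bind})=\partial z/\partial s_{bind}$. Any other dependence on $\theta$, for example through unembedding parameters, does not involve $\theta_{bind}$ and is discarded.

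With this dependency structure in place, the chain rule in denominator layout gives
\begin{equation}
\nabla_{z}\mathcal{L}_{imp}=J_y(z)^\top\nabla_y\mathcal{L}_{imp}=:b,\qquad
\nabla_{s_{bind}}\mathcal{L}_{imp}=J_z(s_{bind})^\top b,
\end{equation}
and one more application yields $\nabla_{\theta_{bind}}\mathcal{L}_{imp}=J_{s_{bind}}(\theta_{bind})^\top J_z(s_{bind})^\top b$. This is the claimed identity.

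For the failure modes, I will use submultiplicativity:
\begin{equation}
\|\nabla_{\theta_{bind}}\mathcal{L}_{imp}\|\le \|J_{s_{bind}}(\theta_{bind})\|_2\,\|J_z(s_{bind})\|_2\,\|J_y(z)\|_2\,\|\nabla_y\mathcal{L}_{imp}\|.
\end{equation}
For \emph{downstream insensitivity}: if $\|J_y(z)\|_2\to 0$ while the other factors stay bounded (for $J_{s_{bind}}(\theta_{bind})$ this holds because the score is the bilinear form $x_j^TW_{QK}x_i$ with bounded inputs), then the whole product vanishes.

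For \emph{upstream gating}, I will model $z$ as the attention readout $z=\sum_k A_k v_k$. Differentiating the softmax as in Lemma~\ref{lem:suppression} gives
\begin{equation}
\frac{\partial z}{\partial s_{sem}}=A_{sem}\Big(v_{sem}-\sum_k A_k v_k\Big),
\end{equation}
so $\|J_z(s_{bind})\|\le 2A_{sem}\max_k\|v_k\|$. Theorem~\ref{thm:barrier} then supplies the bound $O(e^{-(s_{syn}-s_{sem})})$, and this gates the full chain regardless of how large $b$ is.

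The main obstacle is justifying the clean factorization. In a real transformer, $\theta_{bind}$, say $W_{QK}$ at some layer, also influences $y$ through residual paths that bypass $z$, and $z$ is itself a vector-valued sequence. I would handle this by stating the result for the restricted path, defining $z$ as the binding head's output, and noting that any other paths add terms that do not depend on $s_{bind}$. The statement is therefore a claim about the gradient component routed through the binding score, which is exactly what the subsequent CoT comparison, the injected $\nabla_\theta\mathcal{L}_z$ that bypasses $J_y(z)$, needs.
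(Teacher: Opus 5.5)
Your proposal is correct and takes essentially the same approach as the paper. The paper derives the factorization by a direct chain-rule pass along $\theta_{bind}\to s_{bind}\to z\to y\to\mathcal{L}_{imp}$ and gets upstream gating from the softmax Jacobian of Lemma~\ref{lem:suppression}; your explicit $\partial z/\partial s_{sem}=A_{sem}\bigl(v_{sem}-\sum_k A_k v_k\bigr)$ and the submultiplicative norm bound just make that appeal quantitative, and the transposed formulas you inherit from the statement are the correct chain rule for standard (numerator-layout) Jacobians, so the ``denominator layout'' label is only a naming slip.
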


\begin{proposition}[Parallel Gradient Pathway under CoT Supervision]
Under training-time CoT supervision, the total gradient becomes an additive composition:
\begin{equation}
\label{eq:total_grad_additive}
\nabla_{\theta_{bind}}^{total}
=
\nabla_{\theta_{bind}} \mathcal{L}_{y\mid z}
+
\underbrace{
\nabla_{\theta_{bind}} \mathcal{L}_z
}_{\text{Injected Gradient}}.
\end{equation}
The injected term $\nabla_{\theta_{bind}} \mathcal{L}_z$ depends only on predicting the intermediate state $z$.
It provides a parallel learning signal that bypasses the downstream sensitivity term $J_y(z)$ and remains active even when the implicit pathway  is weak or gated by saturation.
When $\|\nabla_{\theta_{bind}} \mathcal{L}_{y\mid z}\|$ is near-zero due to starvation/saturation, this additional pathway \emph{effectively short-circuits} the optimization bottleneck.
\end{proposition}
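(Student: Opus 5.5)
The plan is to obtain the additive decomposition in \eqref{eq:total_grad_additive} from linearity of differentiation applied to the factorized objective $\mathcal{L}_{\mathrm{CoT\text{-}sup}} = \mathcal{L}_z + \mathcal{L}_{y\mid z}$, which was derived above from the chain rule of probability $P(z,y\mid x)=P(z\mid x)P(y\mid x,z)$. Differentiating term by term with respect to $\theta_{bind}$ immediately gives $\nabla^{total}_{\theta_{bind}} = \nabla_{\theta_{bind}}\mathcal{L}_{y\mid z} + \nabla_{\theta_{bind}}\mathcal{L}_z$. The substantive content is then the structural claim about the second summand, namely which Jacobians appear in its chain-rule expansion and which do not.

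Next I would expand the injected term with the same Jacobian (denominator-layout) notation used in the preceding Gradient Attenuation proposition. Since $\mathcal{L}_z = -\log P(z\mid x)$ is a function of the model's prediction of $z$ alone, its chain-rule expansion reads $\nabla_{\theta_{bind}}\mathcal{L}_z = J_{s_{bind}}(\theta_{bind})^\top J_{z}(s_{bind})^\top \nabla_z \mathcal{L}_z$. Here $\nabla_z\mathcal{L}_z$ is the local cross-entropy error at the intermediate token. Comparing this with the implicit expression $b = J_y(z)^\top \nabla_y \mathcal{L}_{imp}$ shows that the factor $J_y(z)$ is absent. This establishes the ``bypasses the downstream sensitivity term'' claim: the injected gradient has no dependence on $\nabla_y$, so it is independent of downstream correctness, which is the content of Prop.~\ref{prop:cot_supervision_injection}. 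The remaining claim is the non-vanishing of $\nabla_z\mathcal{L}_z$. When $z$ is mispredicted, its norm is bounded below by a positive constant, because the softmax cross-entropy gradient is $p-e_z$ and its norm is at least $1-p_z$.

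The last step is the regime comparison. When $\|\nabla_{\theta_{bind}}\mathcal{L}_{y\mid z}\|\approx 0$, the total gradient is approximately the injected term, so it remains bounded away from zero whenever $J_{s_{bind}}$ and $J_z(s_{bind})$ are nondegenerate. This gives the short-circuit statement, and the Relative Gain $\gg 1$ follows by taking the ratio of norms.

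The main obstacle is upstream gating. The factor $J_z(s_{bind})$ is shared by both pathways, and by Lemma~\ref{lem:suppression} and Thm.~\ref{thm:barrier} it is $O(A_{sem})$ under saturation. The injected term therefore bypasses $J_y(z)$ but not the softmax gate. To handle this honestly, I would state the ``remains active'' claim relative to the implicit pathway. The argument would bound $\|\nabla\mathcal{L}_z\|/\|\nabla\mathcal{L}_{imp}\| \gtrsim (1-p_z)/\|J_y(z)\|\,\|\nabla_y\mathcal{L}_{imp}\|$, which shows that the injected term dominates exactly when the downstream Jacobian is small. I would add the remark that CoT additionally shortens the hop, so that the relevant score gap $s_{syn}-s_{sem}$ for the local retrieval of $z$ is smaller, and $A_{sem}$ is correspondingly less suppressed.
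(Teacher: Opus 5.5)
Your argument is essentially the paper's: the paper supports this proposition only through the factorization $\mathcal{L}_{\mathrm{CoT\text{-}sup}}=\mathcal{L}_z+\mathcal{L}_{y\mid z}$, linearity of the gradient, and the observation that $\mathcal{L}_z$ never routes through $y$, so $J_y(z)$ drops out of the injected term. Your caveat goes beyond the paper, which simply asserts the ``even when gated by saturation'' clause without argument. As you note, the injected term still carries the softmax gate $J_z(s_{bind})$, so that clause holds only in the relative sense you propose. In addition, your non-vanishing step needs more than nondegenerate Jacobians, because $J_z(s_{bind})^\top$ has a large kernel: you must assume that $\nabla_z\mathcal{L}_z$ has a nonzero component in the range of $J_z(s_{bind})$, a valid-edge alignment assumption analogous to the paper's $\mathbb{E}[\gamma_{ji^*}]=-c$.
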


\begin{corollary}[Semantic Alignment Boost under CoT Supervision]
Recall the semantic operator $M_{sem}$ defined in \Cref{sec:phase_transition}.
Let $\Delta^{imp}$ denote the implicit gradient component from $\mathcal{L}_{y\mid z}$,
and let $\Delta^{inj}$ denote the injected gradient from $\mathcal{L}_z$.
Assuming the supervised traces are \emph{valid}, i.e., $z$ encodes dependency edges that causally support the correct output $y$,
we have
\begin{equation}
\langle \Delta^{inj}, M_{sem} \rangle_F \gg 0.
\end{equation}
Therefore the total semantic projection increases additively:
\begin{equation}
\langle \Delta^{total}, M_{sem} \rangle_F
=
\langle \Delta^{imp}, M_{sem} \rangle_F
+
\langle \Delta^{inj}, M_{sem} \rangle_F,
\end{equation}
This increases the semantic projection of the update direction and thus tends to raise the alignment ratio $\rho(t)$ in \Cref{eq:alignment_ratio_metric}. As a result, CoT supervision accelerates the crossing of the stability threshold $\tau_{stable}$.
\end{corollary}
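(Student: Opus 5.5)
The first step is to use linearity. The statement to prove is the Corollary (Semantic Alignment Boost under CoT Supervision). The additive identity follows immediately from the preceding Proposition, Eq.~\eqref{eq:total_grad_additive}: the total update splits as $\Delta^{total}=\Delta^{imp}+\Delta^{inj}$. The Frobenius inner product $\langle\cdot,M_{sem}\rangle_F$ is linear, so the decomposition of $\langle \Delta^{total}, M_{sem}\rangle_F$ holds exactly, and no assumption is needed for it. The substance of the argument lies in two further claims: that $\langle \Delta^{inj}, M_{sem}\rangle_F$ is strictly positive and large, and that this positivity raises $\rho(t)$.

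For the positivity claim, I would repeat the Hebbian computation of Prop.~\ref{prop:hebbian} (App.~\ref{app:hebbian_details}), but apply it to the local loss $\mathcal{L}_z$ instead of the implicit loss. Under CoT supervision, the target $z$ is the resolved predecessor token itself. The score on the true edge is $s_{use,src}=x_{use}^TW_{QK}x_{src}$, and the backward error on that edge is $\gamma^{(z)}_{use,src}=\partial\mathcal{L}_z/\partial s_{use,src}$. Because $\mathcal{L}_z$ is a cross-entropy loss directly on copying $x_{src}$, this error has $\mathbb{E}[\gamma^{(z)}]=-c_z<0$; the validity assumption on the traces is exactly what guarantees this sign. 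Unlike the implicit $c$, the magnitude $c_z$ carries no factor of $J_y(z)$. I would then invoke Assumptions~\ref{ass:decomposition} and \ref{ass:uncorrelated_gamma} to kill the noise cross-terms, which gives
\[
\mathbb{E}[\Delta^{inj}] \approx c_z\,\mu_{use}\mu_{src}^T \quad\Longrightarrow\quad \langle \Delta^{inj},M_{sem}\rangle_F\approx c_z\|\mu_{use}\|^2\|\mu_{src}\|^2>0.
\]
Here $\Delta^{inj}$ denotes the descent direction $-\nabla_{W_{QK}}\mathcal{L}_z$.

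This is the main obstacle. The factor $c_z$ still passes through the softmax Jacobian, so Lemma~\ref{lem:suppression} applies to it as well. The injected signal therefore escapes the downstream factor $J_y(z)$ but not the saturation gate $A_{sem}$. I would handle this by stating the ``$\gg 0$'' claim relative to the implicit term. The implicit gradient carries both $A_{sem}$ and $\|J_y(z)\|$, while the injected one carries only $A_{sem}$. The ratio of injected to implicit signal is therefore at least of order $1/\|J_y(z)\|$, which is large precisely in the downstream-insensitive regime.

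For the final step on $\rho(t)$, I would use the SNR form from App.~\ref{app:phase_thresholds}, $\mathbb{E}[\rho]\approx 1/(1+\mathrm{SNR}^{-1})$. Adding $\Delta^{inj}$ increases the systematic part $S$ by $c_z\,\mu_{use}\mu_{src}^T$. If the injected term's noise is uncorrelated with the existing noise, the noise energy grows only additively by $\mathbb{E}\|N^{inj}\|^2$. When $c_z^2\|M_{sem}\|_F^2$ dominates that noise increment, the SNR increases, and since $\rho$ is monotone in SNR, $\rho$ increases as well. The time to reach $\tau_{stable}$ is therefore shortened.

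I would state this last implication explicitly under the condition that the injected term has higher SNR than the current update. The condition is needed because it is not automatic: a large injected term can also raise the denominator of $\rho$.
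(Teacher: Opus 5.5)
Your linearity step is the same as the paper's. Beyond it, the paper gives no separate argument: its justification is the reasoning embedded in the statement. It reads $\langle\Delta^{inj},M_{sem}\rangle_F\gg 0$ directly off the validity hypothesis and then asserts that a larger projection ``tends to raise'' $\rho(t)$. You instead derive positivity by rerunning Prop.~\ref{prop:hebbian} on $\mathcal{L}_z$, and you prove the $\rho$ step from the SNR form of App.~\ref{app:phase_thresholds} under an explicit sufficient condition. This buys three things the paper's route lacks.

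(i) You fix the convention $\Delta=-\nabla$, which is needed for consistency. The Hebbian computation has $\mathbb{E}[\gamma]=-c$, so the raw gradient projects \emph{negatively} onto $M_{sem}$.

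(ii) You recognize that $\rho$ is normalized, so a larger numerator alone proves nothing. Your final condition is the right one and it suffices. Write $a,b\ge 0$ for the implicit and injected signal amplitudes along $M_{sem}$, and $n,m$ for their noise energies. Then $(a+b)^2/(n+m)\ge(a^2+b^2)/(n+m)>a^2/n$ whenever $b^2/m>a^2/n$, by the mediant inequality. Your intermediate phrasing (signal dominating the noise increment) would not suffice by itself.

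(iii) You correctly observe that $c_z$ still carries the gate $A_{sem}$ of Lemma~\ref{lem:suppression}. By Thm.~\ref{thm:barrier}, an absolute ``$\gg 0$'' therefore fails deep in the saturated regime, contrary to the paper's claim that the injected pathway stays active under saturation.

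The price is that you prove a relative statement rather than the absolute one as written. The paper keeps the absolute claim only by folding it into the hypothesis.

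Some of your steps remain heuristic.
\begin{itemize}
\item The bound ``ratio $\gtrsim 1/\|J_y(z)\|$'' is modeled on the chain rule for $\mathcal{L}_{imp}$. The corollary's $\Delta^{imp}$ instead comes from $\mathcal{L}_{y\mid z}$ with $z$ teacher-forced, which does not route through $J_y(z)$ that way. Also, pushing two different vectors through the shared factor $J_{s_{bind}}(\theta_{bind})^\top J_z(s_{bind})^\top$ need not preserve their norm ratio.
\item The Hebbian formula you import drops a centering term. For each query, $\sum_i\gamma_{ji}=0$ under softmax, so the off-edge errors sum to $+c_z$. This gives $\mathbb{E}[\Delta^{inj}]\approx c_z\,\mu_{use}(\mu_{src}-\bar\mu)^\top$, where $\bar\mu$ is an attention-weighted mean of distractor keys ($\approx\mu_{syn}$ under saturation). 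This is the same shape as the steering term of Sec.~\ref{sec:contrastive_objective}. The projection is then $c_z\|\mu_{use}\|^2(\|\mu_{src}\|^2-\langle\bar\mu,\mu_{src}\rangle)$, which is positive only if $\langle\bar\mu,\mu_{src}\rangle<\|\mu_{src}\|^2$. This omission is inherited from the paper's Prop.~\ref{prop:hebbian}.
\item In both versions, the passage from a per-step increase of $\rho$ to an earlier crossing of $\tau_{stable}$ is asserted without a dynamical argument.
\end{itemize}
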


\begin{remark}[Why CoT Helps Reasoning More Than Pattern Matching]
This framework explains why CoT supervision yields large gains on deep compositional tasks but only marginal improvements on shallow pattern matching.
Define the \textit{Relative Semantic Gain}:
\begin{equation}
\text{Gain}
:=
\frac{\bigl\|P_{sem}(\nabla^{imp}+\nabla^{inj})\bigr\|_F}
     {\bigl\|P_{sem}(\nabla^{imp})\bigr\|_F + \epsilon}.
\end{equation}
By subadditivity of the norm,
\begin{equation}
\text{Gain}
\le
1+\frac{\bigl\|P_{sem}(\nabla^{inj})\bigr\|_F}
        {\bigl\|P_{sem}(\nabla^{imp})\bigr\|_F+\epsilon}.
\end{equation}
When $P_{sem}(\nabla^{inj})$ is positively aligned with $P_{sem}(\nabla^{imp})$, this bound is approximately tight, yielding
\begin{equation}
\text{Gain}
\approx
1+\frac{\bigl\|P_{sem}(\nabla^{inj})\bigr\|_F}
        {\bigl\|P_{sem}(\nabla^{imp})\|_F+\epsilon}.
\end{equation}
\begin{itemize}
\item \textbf{Deep Compositional Tasks:}
The implicit semantic signal is weak due to starvation/saturation, so $\|P_{sem}(\nabla^{imp})\|_F$ is small.
Injected gradients dominate, yielding $\text{Gain} \gg 1$ and accelerating the phase transition.
\item \textbf{Shallow Pattern Matching:}
The task does not rely strongly on the semantic subspace, so $\bigl\|P_{sem}(\nabla^{inj})\bigr\|_F$ is also small, yielding $\text{Gain} \approx 1$.
\end{itemize}
\end{remark}
%%%%%%%%%%%%%%%%%%%%%%%%%%%%%%%%%%%%%%%%%%%%%%%%%%%%%%%%%%%%%%%%%%%%%%%%%%%%%%%
%%%%%%%%%%%%%%%%%%%%%%%%%%%%%%%%%%%%%%%%%%%%%%%%%%%%%%%%%%%%%%%%%%%%%%%%%%%%%%%

\end{document}